\newtheorem{lemma}{Lemma}
\renewcommand{\vec}[1]{\boldsymbol{#1}}
\begin{document}

% paper title
% Titles are generally capitalized except for words such as a, an, and, as,
% at, but, by, for, in, nor, of, on, or, the, to and up, which are usually
% not capitalized unless they are the first or last word of the title.
% Linebreaks \\ can be used within to get better formatting as desired.
% Do not put math or special symbols in the title.
%
\title{Feature Selection Based on a Sparse Neural Network Layer with Normalizing Constraints}
% skratka - SNeL-FS
% snell (comparative sneller, superlative snellest) (now chiefly Scotland) Quick, smart; sharp, active, brisk or nimble; lively. Quick-witted; witty. Harsh; severe.
%
%

% author names and IEEE memberships
% note positions of commas and nonbreaking spaces ( ~ ) LaTeX will not break
% a structure at a ~ so this keeps an author's name from being broken across
% two lines.
% use \thanks{} to gain access to the first footnote area
% a separate \thanks must be used for each paragraph as LaTeX2e's \thanks
% was not built to handle multiple paragraphs
%

\author{Peter Bugata and
Peter~Drot\'ar,~\IEEEmembership{Member,~IEEE}% <-this % stops a space
%\thanks{M. Shell was with the Department
%of Electrical and Computer Engineering, Georgia Institute of Technology, Atlanta,
%GA, 30332 USA e-mail: (see http://www.michaelshell.org/contact.html).}% <-this % stops a space
\thanks{P. Bugata and P. Drot\'ar are with the Department of Computers and Informatics, Technical University of Ko\v{s}ice, Letn\'a 9, 42001 Ko\v{s}ice, Slovakia.}% <-this % stops a space
\thanks{Manuscript received xxxx, xxxx; revised xxxx, xxxx. }}

% note the % following the last \IEEEmembership and also \thanks -
% these prevent an unwanted space from occurring between the last author name
% and the end of the author line. i.e., if you had this:
%
% \author{....lastname \thanks{...} \thanks{...} }
%                     ^------------^------------^----Do not want these spaces!
%
% a space would be appended to the last name and could cause every name on that
% line to be shifted left slightly. This is one of those "LaTeX things". For
% instance, "\textbf{A} \textbf{B}" will typeset as "A B" not "AB". To get
% "AB" then you have to do: "\textbf{A}\textbf{B}"
% \thanks is no different in this regard, so shield the last } of each \thanks
% that ends a line with a % and do not let a space in before the next \thanks.
% Spaces after \IEEEmembership other than the last one are OK (and needed) as
% you are supposed to have spaces between the names. For what it is worth,
% this is a minor point as most people would not even notice if the said evil
% space somehow managed to creep in.

% The paper headers
\markboth{Journal of \LaTeX\ Class Files,~Vol.~XX, No.~X, XXXX XXXX}%
{Shell \MakeLowercase{\textit{et al.}}: Bare Demo of IEEEtran.cls for IEEE Journals}
% The only time the second header will appear is for the odd numbered pages
% after the title page when using the twoside option.
%
% *** Note that you probably will NOT want to include the author's ***
% *** name in the headers of peer review papers.                   ***
% You can use \ifCLASSOPTIONpeerreview for conditional compilation here if
% you desire.

% If you want to put a publisher's ID mark on the page you can do it like
% this:
%\IEEEpubid{0000--0000/00\$00.00~\copyright~2015 IEEE}
% Remember, if you use this you must call \IEEEpubidadjcol in the second
% column for its text to clear the IEEEpubid mark.

% use for special paper notices
%\IEEEspecialpapernotice{(Invited Paper)}

% make the title area
\maketitle

% As a general rule, do not put math, special symbols or citations
% in the abstract or keywords.
\begin{abstract}
Feature selection (FS) is an important step in machine learning since it has been shown to improve prediction accuracy while suppressing the curse of dimensionality of high-dimensional data. Neural networks have experienced tremendous success in solving many nonlinear learning problems. Here, we propose a~new neural network-based FS approach that introduces two constraints, the satisfaction of which leads to a~sparse FS layer. We performed extensive experiments on synthetic and real-world data to evaluate the performance of our proposed FS method. In the experiments, we focus on high-dimensional, low-sample-size data since they represent the main challenge for FS. The results confirm that the proposed FS method based on a sparse neural network layer with normalizing constraints (SNeL-FS) is able to select the important features and yields superior performance compared to other conventional FS methods.

\end{abstract}

% Note that keywords are not normally used for peerreview papers.
\begin{IEEEkeywords}
Feature selection, dimensionality reduction, neural network, high-dimensional data.
\end{IEEEkeywords}

% For peer review papers, you can put extra information on the cover
% page as needed:
% \ifCLASSOPTIONpeerreview
% \begin{center} \bfseries EDICS Category: 3-BBND \end{center}
% \fi
%
% For peerreview papers, this IEEEtran command inserts a page break and
% creates the second title. It will be ignored for other modes.
\IEEEpeerreviewmaketitle

\section{Introduction}
% The very first letter is a 2 line initial drop letter followed
% by the rest of the first word in caps.
%
% form to use if the first word consists of a single letter:
% \IEEEPARstart{A}{demo} file is ....
%
% form to use if you need the single drop letter followed by
% normal text (unknown if ever used by the IEEE):
% \IEEEPARstart{A}{}demo file is ....
%
% Some journals put the first two words in caps:
% \IEEEPARstart{T}{his demo} file is ....
%
% Here we have the typical use of a "T" for an initial drop letter
% and "HIS" in caps to complete the first word.
% You must have at least 2 lines in the paragraph with the drop letter
% (should never be an issue)
%
%\hfill mds
%\hfill August 26, 2015
%
% needed in second column of first page if using \IEEEpubid
%\IEEEpubidadjcol
%
\IEEEPARstart{I}{n} recent years, a rapid increase in the amount of data has been observed. Data generated in areas such as healthcare, bioinformatics, transportation, social media, and online education are often high-dimensional and present challenges not only for effective and efficient data management but also in the application of data mining and machine learning techniques~\cite{li2017C}. In addition to the increased demands on computing resources, these high-dimensional data are tightly associated with the curse of dimensionality, which is a phenomenon that adversely affects machine learning algorithms designed for low-dimensional space~\cite{hastie2001}.

One approach to cope with the curse of dimensionality and related issues is feature selection (FS). This is the process of selecting a subset of features from the original feature set. The goal is to select the relevant features and to drop irrelevant, noisy, and redundant features. 
To date, many FS methods based on different principles, such as rough set theory~\cite{wang2020}, the graph-guided regularization approach~\cite{liu2020}, logistic regression with cardinality constraints~\cite{adeli2020}, and many others~\cite{li2017}, \cite{nie}, \cite{wangfs}, have been proposed.

Recently, artificial neural networks and deep learning have provided unprecedented performance on many nonlinear learning problems, such as pattern recognition, sequence recognition, system identification, and medical diagnoses.
Neural networks construct new and more abstract high-level features from the original low-level input variables and discover the distributed representations of data. The achievements of neural networks and deep learning have led to the opinion that advanced machine learning can be done without FS. However, in some cases, mainly where the number of observations is not sufficient, deep learning should be combined with FS to obtain better learning performance \cite{li2017B}. In general, irrelevant features can slow network training and increase requests for computational resources. FS decreases model complexity at the input level and is also helpful in maintaining the interpretability of the original features and understanding the trained model~\cite{cai2018}. 

On the other hand, neural networks and deep learning can be the basis for FS. A deep-learning-based FS approach is presented in~\cite{roy2015}, where it is used in the context of action recognition from video records and utilizes the contributions of input variables to the activation potentials of the first hidden layer. Another FS method~\cite{zou2015}, which is used for remote sensing scene recognition, selects important features according to the minimal reconstruction error in the deep belief network.
Several works~\cite{wasserman2015}, \cite{ibrahim2014}, \cite{zhang2020} introduced FS methods based on neural networks and applied their networks in bioinformatics.

Some neural network-based FS methods utilize the assumption that when training a neural network, the weights of useless features tend to be zero~\cite{cai2018}. To determine the importance of features for making a correct prediction, they define various saliency metrics, including the network weights, the derivative of the network loss function, or both~\cite{belue1995}. 
The saliency values computed after training the network with the full set of features determine the feature importance (scores, weights). 

Other algorithms sequentially eliminate irrelevant features according to some criteria evaluated in each step for all remaining features. The criterion used in~\cite{romero2008} is the difference between the values of the objective function before and after the removal of the feature. The~method in~\cite{setiono1997} eliminates useless features based on the prediction accuracy. The same criterion is used by an FS method~\cite{onnia2001}, which sequentially builds a~set of selected features starting with an empty set.

Recently, more attention has been paid to promoting the sparsity of deep neural networks. The inclusion of a sparse regularization term into the learning model results in zeroing out the redundant weights during the training process, and a~smaller number of nonzero weights leads to sparse feature scores.
Sparse regularizers are often based on the norms of the network weights. An~FS method introduced in~\cite{sun2017} selects the important input variables of a feedforward neural network with 
\(\ell_1\) regularization. The~method in~\cite{li2017D} utilizes \(\ell_1\) and \(\ell_{1/2}\) regularizations for eliminating the redundant dimensions of input data to compress the input layer of the network.
In an FS method~\cite{wasserman2015}, a sparse one-to-one linear layer is added between the input layer and the first hidden layer, and its weights determine the importance values of the input variables. The network is trained with elastic-net regularization~\cite{zou2005}, which is a~convex combination of \(\ell_1\) and \(\ell_2\) regularizations.

In this paper, we propose a new neural network-based FS method called sparse neural network layer FS (SNeL-FS). 
It utilizes a~special sparse layer of the feedforward neural network to select relevant features. Unlike previous work, the sparsity of this FS layer is achieved in a new way by a combination of two normalizing constraints. The feature importance values are then determined with saliency measures designed specifically for the FS layer. 
The proposed FS method is suitable for classification and regression tasks. The main contributions of this paper are summarized as follows:

\begin{enumerate}[label=\arabic*)]
\item We propose a novel FS method based on feedforward neural networks, which selects the most important features with a~special network layer suitable for FS added between the input layer and the first hidden layer.
\item To obtain the sparsity of the FS layer, two constraints are introduced. The first limits the weights of inputs, and the second regulates the standard deviations of outputs of the FS layer neurons. The constraints are dynamically balanced during network training.
\item Two saliency measures based on the FS layer weights are defined to determine the feature importance.
\item  The experimental results compared to the results of popular FS methods demonstrate the ability of the proposed method to identify relevant features even in high-dimensional data with small sample sizes.
\end{enumerate}

The rest of the paper is organized as follows:
Section \ref{sec:nnfs} describes the main idea and the theoretical concept of the proposed FS method. Section~\ref{sec:impl} focuses on its implementation aspects.
In Section \ref{sec:exp}, we first evaluate the performance of the novel FS method on artificial data and then compare its influence on the classification performance on real-world datasets. %Finally, in Section \ref{concl}, we present our conclusions and outline future research directions.
Finally, in Sections \ref{discus} and \ref{concl}, we discuss future research directions and present our conclusions.

\section{Proposed Method}
\label{sec:nnfs}
Let \(\mathcal{F} = \{X_1, X_2, \ldots, X_m\}\) be a set of \(m\) features (variables) and \(\{x_1, x_2, \ldots, x_n\}\) be a set of \(n\) observations of a dataset \(\vec{X} \in \mathbb{R}^{n \times m}\). Let \(\vec{Y} = (y_1, y_2, \ldots, y_n)\) be the target variable. In general, the goal of supervised FS is to find a~set \(\mathcal{S} \subset \mathcal{F}\) of \textit{dim} features, \(\textit{dim} \ll m\), that optimally characterizes the target~\(\vec{Y}\). Based on the set \(\mathcal{S}\), a new dataset \(\vec{X^*} \in \mathbb{R}^{n \times \textit{dim}}\) is extracted and keeps most of the information about \(\vec{X}\). The proposed FS method assigns scores to individual features according to the weights of the added FS layer, and the features with the highest scores form the set \(\mathcal{S}\).

\subsection{Feature Selection Layer} \label{subsec:motiv}
Consider a feedforward neural network that solves a given classification or regression task. 
Then, we include a special hidden dense layer between the input layer and the first hidden layer of this network for the purpose of FS. We denote this layer as the FS layer.

The FS layer has \textit{dim} neurons, where \textit{dim} is the number of input variables (features) to select.
No nonlinear activation function is used in this layer; thus, the layer represents a linear transformation of the input variables. The FS layer neurons do not use any threshold value (bias = 0), so the parameters of the FS layer are only the weights of the connections between the input layer and the FS layer. The FS layer is then fully connected to the first hidden layer of the original network.

The aim of the constructed FS layer in selecting important input variables is to find an optimal solution that simultaneously satisfies the following two conditions:
\begin{enumerate}[label=\roman*)]
\item The weights between the input layer and the FS layer take only the values $1$ or $0$, where $1$ means that the corresponding input variable is selected by the respective FS layer neuron and $0$ means the opposite.
\item For each FS layer neuron, the sum of the weights of the connections entering this neuron equals $1$.
\end{enumerate}

When these conditions are satisfied, each FS layer neuron selects exactly one input variable. Thus, the FS layer weights can be interpreted as variable selection.
Note that two FS layer neurons can select the same variable; hence, after eliminating the zero-weighted variables, the original space of the variables is transformed into a space with at most \textit{dim} dimensions.

While condition ii) limits the size of the weights, condition i) requires weight values of only $0$ or $1$, which seems to be critical because this condition cannot be obtained by continuous methods in neural networks. We approximate this constraint by the sparsity of the FS layer with an idea loosely inspired by batch normalization; a neural network technique that accelerates and to some extent separates the training of individual layers \cite{ioffe2015}.

Assume that the input dataset is standardized. Each input variable is standardized independently; i.e., each variable has a mean of~$0$ and a standard deviation of~$1$.
Then, the new features obtained as FS layer outputs also have a mean value of $0$. Consider some neuron of the FS layer. If only one connection with a weight of~$1$ enters this neuron and the weights of the other connections are~$0$, then the corresponding new feature is only a copy of the selected original variable, and its standard deviation is also~$1$. If several connections have nonzero weights smaller than~$1$ and with a sum of $1$, the standard deviation will decrease unless the respective variables are completely correlated.
Therefore, to achieve a~sparse solution, we replace condition i) with a new condition in which the standard deviation of the output of each FS layer neuron must be at least~$1$.

\subsubsection{FS Layer Definition}
Consider the dataset \(\vec{X}\) with the variables \(X_1, X_2, \ldots, X_m\). We assume that \(\vec{X}\) is standardized; i.e., the mean or expected value \(\operatorname{E}(X_j) = 0\), and the standard deviation and variance \(\operatorname{Var}(X_j) = 1\) for every \(j = 1,\ldots, m\). Next, consider the feedforward neural network to solve the corresponding classification or regression problem. The proposed SNeL-FS method modifies the network by including a dense FS layer of \textit{dim} neurons between the input layer and the first hidden layer.
In the FS layer, no nonlinear activation function and no bias are used; thus, the outputs, i.e.,
the activation values, of its neurons represent new variables and are defined as
\begin{equation*}
A_k =  \sum_{j=1}^m w_{jk} X_j, \quad \text{for } k = 1,\ldots, \textit{dim}, 
\end{equation*}
where \(w_{jk}\) denotes the weight of the connection between the \(j\)-th input neuron and the \(k\)-th FS layer neuron.

The presented FS method selects \textit{dim} important input variables of the dataset \(\vec{X}\) according to the optimal FS layer weights obtained by learning the modified neural network.
The network is trained to minimize a~given objective function, while the weights and activation values of the FS layer should satisfy the following two sets of conditions:
\begin{equation}
\label{_fs_1}
\sum_{j=1}^m \left|w_{jk}\right| \leq 1, \quad \text{for } k = 1,\ldots, \textit{dim}, 
\end{equation}
\begin{equation}
\label{_fs_2}
\operatorname{Var}\left(A_k\right) \geq 1, \quad \text{for } k = 1,\ldots, \textit{dim}. 
\end{equation}
The \(\operatorname{Var}(A_k)\) value is the variance of \(A_k\) computed over all observations of the input dataset \(\vec{X}\) or over observations included in each minibatch in the training process when used.

An~example of a~neural network with the~added FS layer that satisfies conditions (\ref{_fs_1}) and (\ref{_fs_2}) is depicted in Fig.~\ref{fig:NN_FS}. As proven below, each FS layer neuron selects one input variable, which is shown by highlighted connections.

\begin{figure}[ht]
	\centering
	\includegraphics[width=1.0\columnwidth]{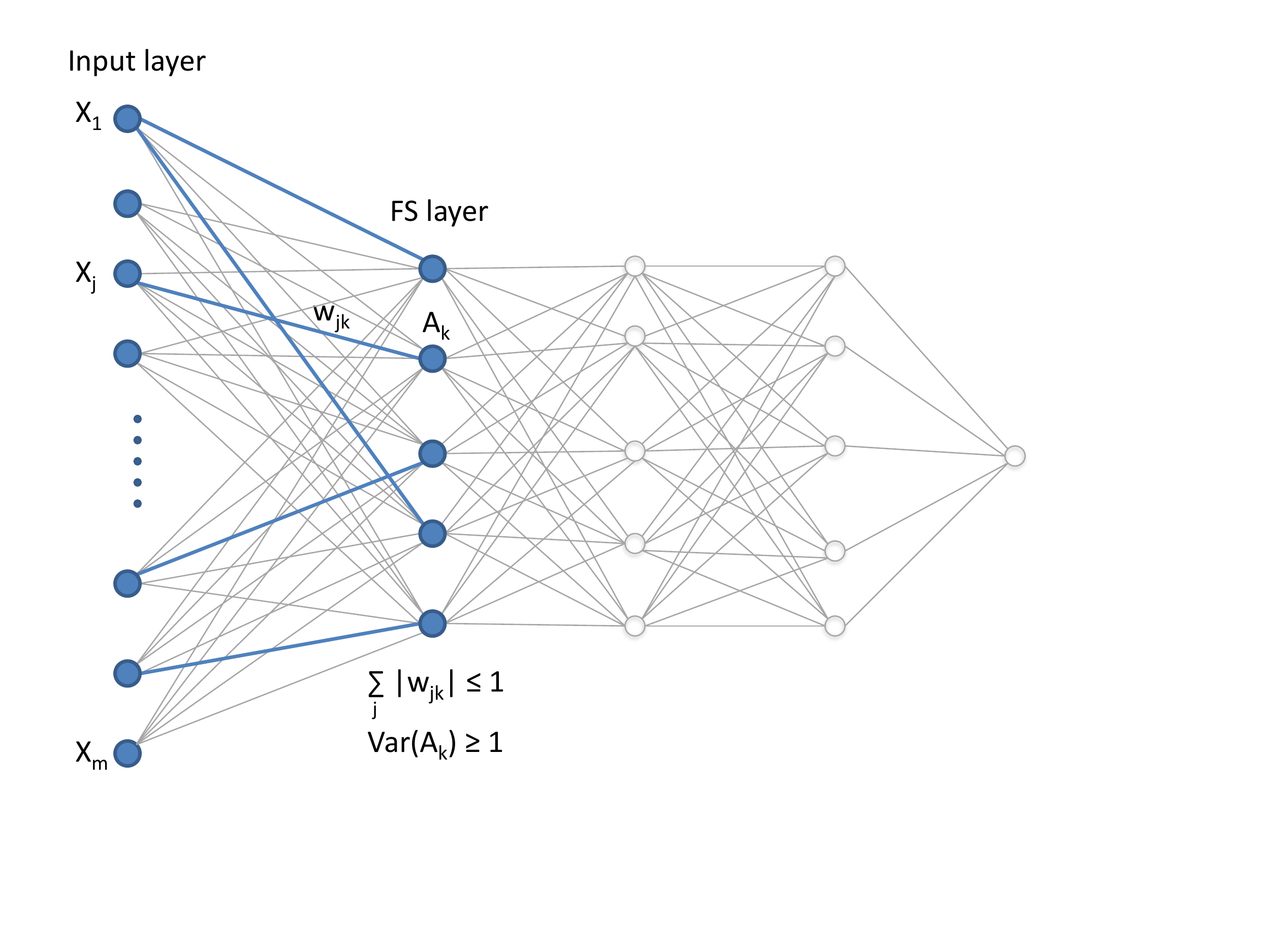}
\caption{Feature selection layer based method}
	\label{fig:NN_FS}
\end{figure}

\subsubsection{Relationship Among the FS Layer Constraints}
\label{subsec:nnfs_back}
When analyzing requirements (\ref{_fs_1}) and (\ref{_fs_2}) for the \(k\)-th neuron of the FS layer, it can be seen that they work against each other. This follows from the relationship between the sum of the absolute values of the weights entering the \(k\)-th neuron and the variance of its activation value \(A_k\).

\begin{lemma}
\label{lemma1}
Let \(\{X_1, X_2, \ldots, X_m\}\) be a set of \(m\) variables from a standardized dataset \(\vec{X}\). 
If \(A_k =  \sum_{j=1}^m w_{jk} X_j\), where \(k \in \{1, \ldots, \textit{dim}\}\) and \(w_{jk} \in \mathbb{R}\), for \(j \in \{1,\ldots, m\}\), then
\begin{equation}
\label{_fs_3}
\operatorname{Var}\left(A_k\right) \leq \left( \sum_{j=1}^m \left| w_{jk} \right| \right)^2.
\end{equation}
\end{lemma}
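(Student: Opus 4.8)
The plan is to expand $\operatorname{Var}(A_k)$ as a bilinear form in the weights $w_{jk}$ and the pairwise covariances of the standardized variables, and then bound every covariance by $1$ using the standardization hypothesis together with the Cauchy--Schwarz inequality. The inequality (\ref{_fs_3}) then drops out because the resulting double sum of absolute values factors as a square.

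Concretely, I would proceed as follows. First, write $\operatorname{Var}(A_k) = \operatorname{Var}\bigl(\sum_{j=1}^m w_{jk} X_j\bigr) = \sum_{i=1}^m \sum_{j=1}^m w_{ik} w_{jk}\, \operatorname{Cov}(X_i, X_j)$, the standard expansion of the variance of a linear combination; the diagonal terms use $\operatorname{Cov}(X_i,X_i) = \operatorname{Var}(X_i) = 1$. Second, apply the Cauchy--Schwarz inequality for covariances, $|\operatorname{Cov}(X_i,X_j)| \le \sqrt{\operatorname{Var}(X_i)\operatorname{Var}(X_j)} = 1$ for all $i,j$, again invoking unit variance of every variable. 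Third, bound each summand termwise by $w_{ik} w_{jk}\operatorname{Cov}(X_i,X_j) \le |w_{ik}|\,|w_{jk}|\,|\operatorname{Cov}(X_i,X_j)| \le |w_{ik}|\,|w_{jk}|$, so that $\operatorname{Var}(A_k) \le \sum_{i=1}^m\sum_{j=1}^m |w_{ik}|\,|w_{jk}| = \bigl(\sum_{j=1}^m |w_{jk}|\bigr)^2$, which is exactly (\ref{_fs_3}).

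There is no genuinely hard step here; the only point needing care is the termwise replacement of $\operatorname{Cov}(X_i,X_j)$ by its absolute-value bound, which must be carried out for each ordered pair $(i,j)$ \emph{before} summing, since $\operatorname{Cov}(X_i,X_j)$ may be negative while $w_{ik}w_{jk}$ may have either sign. If one prefers an argument that mirrors the way $\operatorname{Var}(A_k)$ is actually evaluated during training (over a minibatch), an equivalent route is available: since each $A_k$ has mean $0$, write $\operatorname{Var}(A_k) = \tfrac{1}{n}\sum_{t} \bigl(\sum_j w_{jk} X_j^{(t)}\bigr)^2$, apply the weighted Cauchy--Schwarz inequality $\bigl(\sum_j w_{jk} X_j^{(t)}\bigr)^2 \le \bigl(\sum_j |w_{jk}|\bigr)\bigl(\sum_j |w_{jk}|\,(X_j^{(t)})^2\bigr)$ for each observation $t$, and average over $t$ using $\tfrac{1}{n}\sum_t (X_j^{(t)})^2 = \operatorname{Var}(X_j) = 1$; this yields the same bound $\bigl(\sum_j |w_{jk}|\bigr)^2$. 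I would present the covariance version as the main proof and remark on the per-observation version in a sentence.
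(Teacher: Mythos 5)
Your proposal is correct and follows essentially the same route as the paper: both expand $\operatorname{Var}(A_k)$ as the double sum $\sum_{i}\sum_{j} w_{ik}w_{jk}\operatorname{Cov}(X_i,X_j)$, use standardization to identify covariances with correlations bounded by $1$ in absolute value, and bound the sum termwise by $\sum_i\sum_j |w_{ik}|\,|w_{jk}|$, which factors as the desired square. The only cosmetic difference is that the paper takes the absolute value of the whole sum before applying the triangle inequality, whereas you bound each summand directly; these are equivalent.
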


\begin{proof}
%Editor: Please ensure that the intended meaning has been maintained in this edit.
With the linearity of the expectation, the definitions of variance and covariance lead to the following expression of the variance of \(A_k\):
\begin{equation}
\label{_fs_4}
\operatorname{Var}\left(A_k\right) =
\sum_{i=1}^m \sum_{j=1}^m w_{ik} w_{jk} \operatorname{Cov}(X_i,X_j),
\end{equation}
where \(\operatorname{Cov}(X_i,X_j)\) is the covariance of the variables \(X_i\), \(X_j\). Because the dataset \(\vec{X}\) is standardized, the covariance values of \(X_i\) and \(X_j\) match with their correlation values \(\operatorname{Corr}(X_i,X_j)\), where the values range from -1 to 1.
Based on the properties of absolute values, the variance of \(A_k\) is bounded from above by the following sums:
\begin{equation*}
\begin{split}
&\operatorname{Var}\left(A_k\right) =
\left| \sum_{i=1}^m \sum_{j=1}^m w_{ik} w_{jk} \operatorname{Corr}(X_i,X_j) \right| \leq\\
& \sum_{i=1}^m \left| w_{ik} \right| \sum_{j=1}^m \left| w_{jk} \right| \left| \operatorname{Corr}(X_i,X_j) \right| \leq \sum_{i=1}^m \left| w_{ik} \right| \sum_{j=1}^m \left| w_{jk} \right|.
\end{split}
\end{equation*}
From this we get inequality~(\ref{_fs_3}).
\end{proof}

Assume condition (\ref{_fs_1}) holds for the \(k\)-th neuron of the FS layer. Then, inequality (\ref{_fs_3}) implies that the variance of \(A_k\) is bounded from above by one; i.e., \(\operatorname{Var}\left(A_k\right) \leq 1\). It is an inequality that is opposite the one in condition (\ref{_fs_2}) for the \mbox{\(k\)-th} neuron.
Conversely, if we assume that condition (\ref{_fs_2}) holds for the \mbox{\(k\)-th} neuron, then inequality (\ref{_fs_3}) implies an inequality that is opposite the one in condition (\ref{_fs_1}); i.e., \(\sum_{j=1}^m \left|w_{jk}\right| \geq 1\).
We have shown that conditions (\ref{_fs_1}) and (\ref{_fs_2}) work against each other. In addition, if they hold simultaneously for the \(k\)-th neuron, then  \(\sum_{j=1}^m \left|w_{jk}\right| = 1\) and \(\operatorname{Var}\left(A_k\right) = 1\).

Therefore, it follows from Lemma~\ref{lemma1} that if the input dataset is standardized and conditions (\ref{_fs_1}) and (\ref{_fs_2}) hold, then the outputs of the FS layer are also standardized.

\subsubsection{Sparsity of the FS Layer}
\label{subsec:nnfs_sprs}
Additionally, we examine which weights of the FS layer satisfy inequalities (\ref{_fs_1}) and (\ref{_fs_2}). We show that assuming there is no pair of fully correlated input variables, the solution is sparse. This assumption is not limiting because if two input variables are fully correlated, then one of them can be omitted without a loss of information.

\begin{lemma}
\label{lemma2}
Let \(\{X_1, X_2, \ldots, X_m\}\) be a set of \(m\) variables of a standardized dataset \(\vec{X}\). Suppose that for every \(i, j \in \{1,\ldots, m\}\), \(i \neq j\), the variables \(X_i, X_j\) are not completely correlated; i.e., \(\left|\operatorname{Corr}(X_i, X_j) \right| < 1\). Then, the solution of inequalities (\ref{_fs_1}) and (\ref{_fs_2}) for \(k \in \{1, \ldots, \textit{dim}\}\) is a~set of all vectors \(\vec{w_k} = (w_{1k}, w_{2k}, \ldots, w_{mk})\) that satisfy the following conditions:
%%
%% dal by som mnozne cislo "satisfy", lebo tie vektory vyhovuju podmienkam, nie mnozina
%%
%
\begin{enumerate}[label=\alph*)]
\item There is exactly one \(i \in \{1, \ldots, m\}\) such that \(\left |w_{ik} \right | = 1\).
\item For every \(j \in \{1, \ldots, m\}\), \(j \neq i\), the weight  \(w_{jk} = 0\).
\end{enumerate}
\end{lemma}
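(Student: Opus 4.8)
The plan is to establish the two inclusions separately. First I would verify that every vector $\vec{w_k}$ of the stated form is a solution: if $\left|w_{ik}\right| = 1$ for a single index $i$ and $w_{jk} = 0$ for all $j \neq i$, then $\sum_{j=1}^m \left|w_{jk}\right| = 1$, so (\ref{_fs_1}) holds, and $A_k = w_{ik} X_i$, whence $\operatorname{Var}(A_k) = w_{ik}^2 \operatorname{Var}(X_i) = 1$ because $\vec{X}$ is standardized, so (\ref{_fs_2}) holds as well. Hence all such vectors belong to the solution set.

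For the converse — the substantive direction — suppose $\vec{w_k}$ satisfies (\ref{_fs_1}) and (\ref{_fs_2}). By Lemma~\ref{lemma1}, $\operatorname{Var}(A_k) \leq \bigl(\sum_{j} \left|w_{jk}\right|\bigr)^2 \leq 1$, while (\ref{_fs_2}) gives $\operatorname{Var}(A_k) \geq 1$; therefore $\operatorname{Var}(A_k) = 1$ and $\sum_{j=1}^m \left|w_{jk}\right| = 1$. Consequently both inequalities in the chain used to prove (\ref{_fs_3}) collapse to equalities. The decisive one is
\begin{equation*}
\sum_{i=1}^m \left|w_{ik}\right| \sum_{j=1}^m \left|w_{jk}\right| \left|\operatorname{Corr}(X_i,X_j)\right| = \sum_{i=1}^m \left|w_{ik}\right| \sum_{j=1}^m \left|w_{jk}\right|,
\end{equation*}
which rearranges to $\sum_{i,j} \left|w_{ik}\right| \left|w_{jk}\right| \bigl(1 - \left|\operatorname{Corr}(X_i,X_j)\right|\bigr) = 0$. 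Every summand is nonnegative, so every summand vanishes.

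Finally I would invoke the non-degeneracy hypothesis. For $i = j$ the factor $1 - \left|\operatorname{Corr}(X_i,X_i)\right| = 0$ conveys nothing, but for $i \neq j$ we have $1 - \left|\operatorname{Corr}(X_i,X_j)\right| > 0$ by assumption, forcing $\left|w_{ik}\right|\left|w_{jk}\right| = 0$. Thus no two distinct indices can simultaneously carry nonzero weight, i.e., the support of $\vec{w_k}$ has at most one element. Since $\sum_{j} \left|w_{jk}\right| = 1 \neq 0$, the support is nonempty, so it consists of exactly one index $i$, and then $\left|w_{ik}\right| = \sum_{j} \left|w_{jk}\right| = 1$ while $w_{jk} = 0$ for $j \neq i$. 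This yields a) and b).

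The only delicate point is the equality-case analysis: one must keep the diagonal terms $i = j$ out of the argument, since there the correlation equals $1$ and imposes no constraint, and then argue cleanly that pairwise incompatibility of the nonzero weights forces a singleton support. Everything else is a direct computation already contained in the proof of Lemma~\ref{lemma1}.
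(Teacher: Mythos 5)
Your proof is correct and follows essentially the same route as the paper: both hinge on the chain of inequalities from Lemma~\ref{lemma1} and on the fact that two distinct nonzero weights together with \(\left|\operatorname{Corr}(X_i,X_j)\right| < 1\) introduce strict slack, contradicting \(\operatorname{Var}(A_k) \geq 1\). The paper phrases this as a direct contradiction (assume two nonzero components \(w_{pk}, w_{qk}\) and derive \(\operatorname{Var}(A_k) < 1\)), whereas you phrase it as an equality-case analysis forcing every off-diagonal term \(\left|w_{ik}\right|\left|w_{jk}\right|\bigl(1 - \left|\operatorname{Corr}(X_i,X_j)\right|\bigr)\) to vanish; the two are contrapositives of the same argument.
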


\begin{proof}
It is easy to see that each vector \(\vec{w_k}\)  %of the weights entering the \(k\)-th FS layer neuron, - pre zjednodusenie som vynechal
with a~single nonzero component whose absolute value equals~$1$ satisfies the system of inequalities (\ref{_fs_1}) and (\ref{_fs_2}) for \(k \in \{1, \ldots, \textit{dim}\}\). We prove that only such vectors are solutions of this system.

Let \(k\) be an arbitrary number from the set \(\{1, \ldots, \textit{dim}\}\) and let the vector \(\vec{w_k}\) satisfy the system of inequalities (\ref{_fs_1}) and (\ref{_fs_2}). We know that the variance of the activation \(A_k\) of the \(k\)-th FS layer neuron is bounded from above:
\begin{equation}
\label{_fs_5}
\begin{split}
\operatorname{Var}\left(A_k\right) \leq
 \sum_{i=1}^m \sum_{j=1}^m  \left| w_{ik} \right|\left| w_{jk} \right|  \left|\operatorname{Corr}(X_i,X_j) \right|. 
\end{split}
\end{equation}
Generally, \(\left| \operatorname{Corr}(X_i, X_j) \right| \leq 1\) for every \(i, j \in \{1,\ldots, m\}\); thus, for each term on the right side of inequality~(\ref{_fs_5}), the following is valid:
\begin{equation}
\label{_fs_6}
\left| w_{ik} \right|\left| w_{jk} \right|  \left|\operatorname{Corr}(X_i,X_j) \right| \leq \left| w_{ik} \right| \left| w_{jk} \right|.
\end{equation}

Assume that there are two different nonzero components \(w_{pk}, w_{qk}\) of the vector \(\vec{w_k}\).
According to the assumption in Lemma~\ref{lemma2}, \(\left| \operatorname{Corr}(X_p, X_q) \right| < 1\) and for the corresponding term of~(\ref{_fs_5}), the following holds:
\begin{equation}
\label{_fs_7}
\left| w_{pk} \right|\left| w_{qk} \right|  \left|\operatorname{Corr}(X_p,X_q) \right| < \left| w_{pk} \right| \left| w_{qk} \right|.
\end{equation}
After summation, we obtain the following:
\begin{equation}
\label{_fs_8}
\begin{split}
\operatorname{Var}\left(A_k\right) 
< \sum_{i=1}^m \sum_{j=1}^m \left| w_{ik} \right|  \left| w_{jk} \right| = \left( \sum_{j=1}^m \left| w_{jk} \right| \right)^2.
\end{split}
\end{equation}
Because the vector \(\vec{w_k}\) satisfies condition (\ref{_fs_1}), inequality (\ref{_fs_8}) implies that \(\operatorname{Var}\left(A_k\right) < 1\). However, this is a contradiction to the assumption that the vector \(\vec{w_k}\) satisfies condition (\ref{_fs_2}), \(\operatorname{Var}\left(A_k\right) \geq 1\).

Therefore, if the vector \(\vec{w_k}\) satisfies conditions (\ref{_fs_1}) and (\ref{_fs_2}), it has at most one nonzero component. If all components are zero, then condition (\ref{_fs_2}) does not hold. Thus, the only possibility is that the vector \(\vec{w_k}\) contains exactly one nonzero component, and according to conditions (\ref{_fs_1}) and (\ref{_fs_2}), its absolute value must be equal to $1$.
\end{proof}

This lemma implies that for each FS layer neuron, the solutions of inequalities (\ref{_fs_1}) and (\ref{_fs_2}) are vectors of the weights entering this neuron, which are not only sparse but even have exactly one nonzero component. This means that the FS layer realizes the selection of the input variables. Ideally, each FS layer neuron selects exactly one input variable corresponding to the connection with the nonzero weight.

\iffalse
In Fig. \ref{fig:ineq}, the idea for two independent input variables is depicted. When the variables \( X_{1}, \ldots , X_{m} \) are independent, the covariance \(\operatorname{Cov}(X_i,X_j)\) is zero for every \(i, j = 1,\ldots , m\), \(i \neq j\), and conditions (\ref{_fs_1}) and (\ref{_fs_2}) can be rewritten into a~system of inequalities:
%
\begin{equation}
\label{_fs_9}
\sum_{j=1}^m \left|w_{jk}\right| \leq 1, \quad \sum_{j=1}^m w_{jk}^2 \geq 1, \quad \text{for } k = 1,\ldots, \textit{dim}.
\end{equation}
%
For m = 2, the first inequality corresponds to the inside of the square and the second to the outside of the circle in Fig.~\ref{fig:ineq}. The solution \( \{ (1, 0), (0, 1), (-1, 0), (0, -1) \} \) is highlighted in red.
%
\begin{figure}[ht]
	\centering
	\includegraphics[width=0.7\columnwidth]{}
\caption{Solution of inequalities (1) and (2) for two independent variables}
	\label{fig:ineq}
\end{figure}

Note that the optimal weights of the FS layer achieved by training the modified neural network do not have to fully satisfy conditions (\ref{_fs_1}) and (\ref{_fs_2}); therefore, they do not have to be completely sparse but only close to the sparse solution.
\fi

\subsection{Neural Network Model}
\label{subsec:nnfs_nn_train}
Let us consider a feedforward neural network that forms the basis for the use of the SNeL-FS method. Suppose there are \(H\) hidden layers in the model. We denote the model parameter by \( \vec{\theta} = \{\vec{W}^{1}, \vec{b}^{1}, \ldots, \vec{W}^{H+1}, \vec{b}^{H+1}\} \), where \(\vec{W}^{h}\) is the weight matrix of the connections between the (\(h-1\))-th and \(h\)-th layers and \(\vec{b}^{h}\) is the bias in the \(h\)-th layer for \(h = 1, \ldots, H+1\).
Let the minimized objective function be of the form
\begin{equation}
\label{_fs_10}
f(\vec{\theta}) = l(\vec{\theta}) + \lambda \sum_{h=1}^{H+1} \Omega(\vec{W}^{h}),
\end{equation}
which is the sum of the loss function \(l(\vec{\theta})\) and the regularization term \(\sum_{h=1}^{H+1} \Omega(\vec{W}^{h})\) multiplied by the regularization parameter \(\lambda \in R_0^+\). The type of loss function used depends on the problem being solved. The regularization term can be added to avoid the possible overfitting of the model.

%\subsubsection{Model with FS Layer}
%\label{subsec:nnfs_add}
%
After including the FS layer between the input layer and the first hidden layer, the model is slightly changed, and its new parameter is
\( \vec{\tilde{\theta}} = \{\vec{W}, \vec{\tilde{W}}^{1}, \vec{\tilde{b}}^{1}, \ldots, \vec{\tilde{W}}^{H+1}, \vec{\tilde{b}}^{H+1}\} \), where \(\vec{W} = (w_{jk})\) is the weight matrix connecting the input layer to the FS layer.
The original optimization task is transformed into a new task, the results of which are used to select the most important input variables. Because the added FS layer should satisfy conditions (\ref{_fs_1}) and (\ref{_fs_2}), the original task is changed to the following constrained optimization problem: 
\begin{equation*}
\min_{ \vec{\tilde{\theta}}} f( \vec{\tilde{\theta}}) = l( \vec{\tilde{\theta}})  + \lambda \sum_{h=1}^{H+1} \Omega(\vec{\tilde{W}}^{h}),\\[-0.2cm]
\end{equation*}
\begin{equation}
\label{_fs_11}
\sum_{j=1}^m \left|w_{jk}\right|- 1 \leq 0, \quad \text{for } k = 1,\ldots, \textit{dim},\\[-0.2cm]
\end{equation}
\begin{equation*}
1 - \operatorname{Var}\left(A_k\right) \leq 0, \quad \text{for } k = 1,\ldots, \textit{dim}.
\end{equation*}
Problem (\ref{_fs_11}) aims to minimize the objective function \(f(\vec{\tilde{\theta}})\) with respect to the parameter \(\vec{\tilde{\theta}}\) and subject to two sets of additional constraints. Because solving this problem is extremely difficult, we have transformed it to an unconstrained optimization problem, the solution of which approximates the solution of constrained problem (\ref{_fs_11}). The idea is based on the Karush-Kuhn-Tucker (KKT) approach~\cite{boyd2004}, which generalizes the method of Lagrange multipliers.
However, to avoid the inclusion of a large number of hyperparameters, we applied some simplification. Finally, the optimization problem is defined as
\begin{equation}
\label{_fs_13}
\begin{split}
\min_{\vec{\tilde{\theta}}} &F(\vec{\tilde{\theta}}) = f(\vec{\tilde{\theta}}) + \lambda_S \Omega_S(\vec{W}) + \lambda_A \Omega_A(\vec{W}),  \\
&\Omega_S(\vec{W}) = \sum_{k=1}^{\textit{dim}} \max (0, \sum_{j=1}^m \left|w_{jk}\right|- 1), \\[-0.2cm] 
&\Omega_A(\vec{W}) = \sum_{k=1}^{\textit{dim}} \max (0, 1 - \operatorname{Var}\left(A_k\right)),
\end{split}
\end{equation}
where \(\lambda_S\), \(\lambda_A \in \mathbb{R}^+_0\) are two multipliers. The penalty terms \(\Omega_S\) and \(\Omega_A\) are always nonnegative and equal to zero only if conditions (\ref{_fs_1}) and (\ref{_fs_2}) for all FS layer neurons are satisfied.

It can be seen that the \(\Omega_S\) penalty is a weaker form of \(\ell_1\) regularization. A neuron of the FS layer contributes to the penalty only if the sum of the absolute values of the weights entering this neuron exceeds $1$.
The contribution of an FS layer neuron to the \(\Omega_A\) penalty is positive only if the variance of its activation value is less than $1$.

The network training process minimizes the original objective function \(f\) defined by (\ref{_fs_10}) with respect to the parameter \(\vec{\tilde{\theta}}\) and balances conditions (\ref{_fs_1}) and (\ref{_fs_2}).
When the penalty \(\Omega_S\) is positive in a certain epoch of network training, to minimize it, the absolute values of the respective FS layer weights are reduced during the following epochs. According to Lemma~\ref{lemma1}, this results in a decrease in the variance of the corresponding activations and an increase in the \(\Omega_A\) penalty. Conversely, a decrease in the \(\Omega_A\) penalty leads to an increase in the absolute values of the respective FS layer weights, which can result in an increase in \(\Omega_S\). Furthermore, the prediction error must be minimized, so mainly the weights belonging to the relevant input variables should be increased, whereas the others may decrease.

\subsubsection{Differentiability of the Objective Function}
\label{subsec:nnfs_diff}
Gradient-based methods used for optimization in neural networks implicitly assume that optimized objective functions are differentiable. In examining the differentiability of the objective function \(F(\vec{\tilde{\theta}})\) described by (\ref{_fs_13}), we focus on the newly defined terms \(\Omega_S(\vec{W})\) and \(\Omega_A(\vec{W})\) penalizing the breach of conditions (\ref{_fs_1}) and (\ref{_fs_2}), respectively. 

The partial derivative of the penalty \(\Omega_S(\vec{W})\) with respect to \(w_{jk}\), where \(j \in \{1, \ldots, m\}\) and \(k \in \{1, \ldots, \textit{dim}\}\), can be expressed as follows:
\begin{equation}
\label{_fs_14}
\frac{\partial \Omega_S(\vec{W})}{\partial w_{jk}} = 
     \begin{cases}
       sgn(w_{jk}), \quad &\text{if } \sum_{i=1}^m \left|w_{ik}\right| > 1,\\
       0, \quad &\text{otherwise}.\\
     \end{cases}
\end{equation}
The function \(\Omega_S(\vec{W})\) is not mathematically differentiable with respect to \(w_{jk}\) at the points where \(\sum_{i=1}^m \left|w_{ik}\right| = 1\) and for \(w_{jk} = 0\). To make this function differentiable in its domain, we set the derivative at these points to zero.

Consider \(n\) observations of the dataset \(\vec{X}\). We denote the values of the variables \(X_1, X_2, \ldots, X_m\) of the \(i\)-th observation for \(i = 1, 2, \ldots, n\) as \(X_1^i, X_2^i, \ldots, X_m^i\). Then, the activation value of the \(k\)-th FS layer neuron for the \(i\)-th observation is
\begin{equation}
\label{_fs_15}
A_k^i = \sum_{j=1}^m w_{jk} X_j^i.
\end{equation}
Assuming that the dataset \(\vec{X}\) is standardized, the expected value of \(A_k\) is zero, and its sample variance is computed as
\begin{equation}
\label{_fs_16}
\operatorname{Var}\left(A_k\right)
= \frac{1}{n} \sum_{i=1}^n \left( A_k^i - \operatorname{E}(A_k)\right)^2 = \frac{1}{n} \sum_{i=1}^n \left( A_k^i \right)^2.
\end{equation}
After substituting (\ref{_fs_15}) into (\ref{_fs_16}), applying the chain rule, and utilizing the linearity of differentiation, the partial derivative of the variance \(\operatorname{Var}\left(A_k\right)\) with respect to \(w_{jk}\) has the following form:
\begin{equation}
\label{_fs_17}
\frac{\partial \operatorname{Var}\left(A_k\right)}{\partial w_{jk}} = 
\frac{2}{n} \sum_{i=1}^n A_k^i X_j^i =
\frac{2}{n} \sum_{i=1}^n \sum_{l=1}^m w_{lk} X_l^i X_j^i .  
\end{equation}

Now, we can compute the partial derivative of the penalty  \(\Omega_A(\vec{W})\) with respect to \(w_{jk}\) as follows:
\begin{equation}
\label{_fs_18}
\frac{\partial \Omega_A(\vec{W})}{\partial w_{jk}} = 
     \begin{cases}
       -\frac{2}{n} \sum_{i=1}^n A_k^i X_j^i,  &\text{if } \operatorname{Var}\left(A_k\right) < 1,\\
       \text{ } 0,  &\text{otherwise}.\\
     \end{cases}
\end{equation}
Technically, we solve the problem with the partial derivatives at the points where \(\operatorname{Var}\left(A_k\right) = 1\) by setting them to zero.

\subsection{Saliency Measures Based on the FS Layer}
\label{subsec:nnfs_fs}
The optimal model obtained by training the modified neural network provides the optimal weights \(\vec{\hat{W}}\) of the FS layer. Ideally, the weight matrix \(\vec{\hat{W}}\) contains \textit{dim} nonzero weights, one for each FS layer neuron (Lemma \ref{lemma2}). Then, each FS layer neuron selects exactly one important input variable corresponding to the nonzero weight. Note that several neurons can select the same variable. 

Generally, although the optimal solution is close to a sparse solution, the number of nonzero weights of \(\vec{\hat{W}}\) does not have to be small enough to unambiguously select at most \textit{dim} input variables.
Therefore, to select the required number of the most important variables, we introduce two alternative saliency measures derived from the optimal weights of the FS layer.
The first utilizes the sum of the weights, whereas the second uses the maximum weight.

\subsubsection{Sum-weight Saliency}
\label{subsec:sum-weight}
Consider the input variable \(X_j\). The simplest definition of its saliency is the sum of the absolute values of the weights leaving the input neuron corresponding to \(X_j\). We adjust this measure based on the fact that if we divide all the weights entering an FS layer neuron by a constant and simultaneously multiply all the weights leaving this neuron in the next layer by the same constant, we obtain an identical solution in terms of prediction.

Because we cannot exclude cases where the standard deviation of the activation value \(\operatorname{std}(A_k)\) in the optimal model deviates significantly from $1$, we use normalized weights when calculating the saliency; i.e., we divide each weight entering the \(k\)-th neuron of the FS layer by \(\operatorname{std}(A_k)\), for \(k \in \{1, \ldots, \textit{dim}\}\). In addition, to eliminate the dependence of the saliency value on the number of FS layer neurons, we divide the sum of the normalized weights by \textit{dim}. 
Then, we define the so-called \textit{sum-weight saliency} of the input variable \(X_j\) for the weight matrix \(\vec{\hat{W}}\) as follows:
\begin{equation}
\label{_fs_21}
\operatorname{S_{SW}}(X_j,\vec{\hat{W}}) = \frac{1}{\textit{dim}}\sum_{k=1}^{dim} \frac{\left| w_{jk} \right|}{\operatorname{std}(A_k)}. 
\end{equation}

It can be proven that the defined sum-weight saliency measure has the following property:

\begin{lemma}
\label{lemma3}
Let \(\{X_1, X_2, \ldots, X_m\}\) be the set of \(m\) variables of a standardized dataset \(\vec{X}\). If the weights \(\vec{\hat{W}}\) of the FS layer satisfy conditions (\ref{_fs_1}) and (\ref{_fs_2}), then the sum of the sum-weight saliency values of all variables of \(\vec{X}\) for the weight matrix \(\vec{\hat{W}}\) is equal to one; i.e.,
\begin{equation}
\label{_fs_23}
\sum_{j=1}^{m} \operatorname{S_{SW}}(X_j,\vec{\hat{W}}) = 1. \end{equation}
\end{lemma}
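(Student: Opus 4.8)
The plan is to evaluate the left-hand side of~(\ref{_fs_23}) directly: substitute the definition~(\ref{_fs_21}) of the sum-weight saliency, interchange the two finite summations (which is trivially valid since both ranges are finite), and reduce the statement to a per-neuron computation. Concretely,

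\begin{equation*}
\sum_{j=1}^{m} \operatorname{S_{SW}}(X_j,\vec{\hat{W}}) = \frac{1}{\textit{dim}} \sum_{j=1}^{m} \sum_{k=1}^{\textit{dim}} \frac{\left| w_{jk} \right|}{\operatorname{std}(A_k)} = \frac{1}{\textit{dim}} \sum_{k=1}^{\textit{dim}} \frac{1}{\operatorname{std}(A_k)} \sum_{j=1}^{m} \left| w_{jk} \right|,
\end{equation*}

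so it suffices to show that for each fixed \(k\) the factor \(\bigl(\sum_{j=1}^{m} |w_{jk}|\bigr) / \operatorname{std}(A_k)\) equals \(1\).

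Next I would invoke the consequence of Lemma~\ref{lemma1} already recorded in Section~\ref{subsec:nnfs_back}: if the weights entering the \(k\)-th FS layer neuron satisfy both~(\ref{_fs_1}) and~(\ref{_fs_2}), then the two mutually opposing bounds derived from inequality~(\ref{_fs_3}) collapse to the equalities \(\sum_{j=1}^{m} |w_{jk}| = 1\) and \(\operatorname{Var}(A_k) = 1\). In particular \(\operatorname{Var}(A_k) = 1 > 0\), so \(\operatorname{std}(A_k) = \sqrt{\operatorname{Var}(A_k)} = 1\) is well defined and nonzero, which also legitimizes the division used in the definition~(\ref{_fs_21}). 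Hence for every \(k \in \{1, \ldots, \textit{dim}\}\) the inner factor equals \(1/1 = 1\).

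Substituting this back yields \(\sum_{j=1}^{m} \operatorname{S_{SW}}(X_j,\vec{\hat{W}}) = \frac{1}{\textit{dim}} \sum_{k=1}^{\textit{dim}} 1 = 1\), which is~(\ref{_fs_23}). There is essentially no obstacle in this argument: the lemma is a bookkeeping corollary of the equality case of Lemma~\ref{lemma1}. The only subtlety worth stating explicitly is that \(\operatorname{std}(A_k)\) cannot vanish — again guaranteed by \(\operatorname{Var}(A_k) = 1\) — so that every term in the saliency sum is meaningful; if one instead only assumed~(\ref{_fs_1}) or only~(\ref{_fs_2}), the argument would break, so the hypothesis that both constraints hold is used in full force.
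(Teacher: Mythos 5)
Your proof is correct; note that the paper states Lemma~\ref{lemma3} without supplying a proof, and your argument is exactly the intended one: after interchanging the two finite sums, the equality case recorded after Lemma~\ref{lemma1} --- conditions (\ref{_fs_1}) and (\ref{_fs_2}) together force \(\sum_{j=1}^m |w_{jk}| = 1\) and \(\operatorname{Var}(A_k) = 1\), hence \(\operatorname{std}(A_k) = 1\) --- makes each per-neuron factor equal to \(1\), so the total is \(\frac{1}{\textit{dim}}\cdot\textit{dim}=1\). Your explicit observation that \(\operatorname{Var}(A_k)=1\) also guarantees the denominators in (\ref{_fs_21}) are nonzero is a sensible addition.
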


\subsubsection{Max-weight Saliency}
\label{subsec:max-weight}
The second approach is based on the assumption that each FS layer neuron selects one variable. It does not matter whether a variable has been selected one or more times. Then, the weights entering an FS layer neuron can be normalized by dividing by the sum of all these weights. The \textit{max-weight saliency} of the variable \(X_j\) for the weight matrix \(\vec{\hat{W}}\) is defined as the maximum of all normalized weights leaving the input neuron corresponding to the variable \(X_j\):
\begin{equation}
\label{_fs_22}
\operatorname{S_{MW}}(X_j,\vec{\hat{W}}) = \max_{k=1}^{\textit{dim}} \frac{\left| w_{jk} \right|}{\sum_{i=1}^m \left|w_{ik}\right|}. 
\end{equation}

\subsubsection{Comparison of Saliency Measures}
\label{subsec:compar}
It is easy to see that under conditions (\ref{_fs_1}) and (\ref{_fs_2}), the sum-weight saliency values, like the max-weight saliency values, are from the interval \([0, 1]\) for all input variables. For variables clearly selected by the FS layer neurons, the max-weight saliency values are close to $1$, whereas according to Lemma \ref{lemma3}, the sum-weight saliency values are usually smaller.

With the optimal weights of the FS layer obtained by training the modified neural network, the sum-weight saliency distinguishes in more detail the importances of individual variables, whereas the second measure based on the maximum value seems to separate the relevant variables more significantly from the irrelevant ones. The difference between these two measures is illustrated on the MNIST dataset in online supplementary material.

\section{Implementation Aspects of SNeL-FS}
\label{sec:impl}
In this section, we describe in more detail the individual steps of the proposed SNeL-FS method, focusing on the choice of network architecture, weight initialization, network training with dynamic FS layer hyperparameters, selection of the optimal model and final determination of the important input variables according to the defined saliency measures. 

\subsection{Neural Network for FS}
\label{subsec:impl_arch}
The choice of an appropriate network architecture significantly affects the results of the proposed FS method.
We always assume that the feedforward neural network used as the basis of the method is suitable for solving a given problem. Finding the ideal network architecture for a task is a specific problem that should be solved through experiments guided by monitoring the error on the validation set \cite{goodfellow2016}.

In our experiments, we usually use a simple network with one or two hidden dense layers composed of rectified linear unit (ReLU) neurons as a basis. The number of hidden neurons is affected by the number of samples available in a given dataset. For high-dimensional small-sample-size data, we try to reduce the number of network parameters due to the risk of overfitting, and therefore we use architectures with a~small number of hidden neurons.

The output layer and the type of loss function depend on the problem being solved.
In the case of classification, we utilize one sigmoid neuron for binary tasks. For multiclass tasks with \(k\) classes, we chose \(k\) softmax neurons. The cross-entropy loss function is used. In the case of regression problems, one linear neuron and the mean squared error (MSE) loss function are utilized.

\subsection{Network Training}
\label{subsec:impl_train}
The modified neural network containing the added FS layer is trained to minimize the objective function \(F(\vec{\tilde{\theta}})\) defined by (\ref{_fs_13}). The first step is to initialize the network weights. 
We assume that in all layers except the FS layer, the initial weights are randomly generated, for example, from a uniform distribution. The weights of the FS layer are initialized to a~constant value of~\(\frac{1}{2m}\), where \(m\) is the total number of input variables. Such initial weights satisfy constraint~(\ref{_fs_1}) because for each FS layer neuron, the sum of the weights entering it is~\(\frac{1}{2}\). On the other hand, they do not satisfy constraint (\ref{_fs_2}) because the variance of the new variables~\(A_k\) is at most~\(\frac{1}{4}\).
To reduce the penalty \(\Omega_A\) and increase the prediction performance, the weights increase during training, especially the weights of the relevant variables.

The amounts of penalization \(\Omega_S\) and \(\Omega_A\) are controlled by the hyperparameters \(\lambda_S\) and \(\lambda_A\), respectively. Like the other hyperparameters, these can also be chosen through common practices, such as a grid search, a random search, or manual tuning~\cite{goodfellow2016}. 
Another method is presented in~\cite{setiono1997},~\cite{watzenig2004}, where instead of fixed regularization parameters, iteratively adapted parameters are used.

We utilize the dynamic hyperparameter approach shown in~\cite{smith2017}. This methodology seems to be effective in finding a~balance between the minimization of the prediction error and the penalties for the breach of conditions (\ref{_fs_1}) and (\ref{_fs_2}).

During network training, we let the values of \(\lambda_S\) and \(\lambda_A\) cyclically vary within given ranges. We utilize a triangular window, where the values in a cycle first linearly increase and then linearly decrease.
The values of \(\lambda_S\) are evenly distributed in the range \(\left[\min\_\lambda_S, \max\_\lambda_S \right]\), and the number of values used in one half of the cycle is given by \textit{steps}\(\_\lambda_S\). Similar approach is used for \(\lambda_A\). During the network training, \(\lambda_S\) passes the number of cycles given by \textit{cycles}\(\_\lambda_S\), and for each of its values, \(\lambda_A\) passes \textit{cycles}\(\_\lambda_A\) cycles.
For each stage defined by the pair (\(\lambda_S,  \lambda_A\)), the same number of epochs is used for training and is given by the value of \textit{epochs\_per\_stage}.
The total number of epochs is determined by the 
number of cycles, steps, and epochs per stage.
The idea is illustrated in Fig.~\ref{fig:DHS}.
\begin{figure}[ht]
	\centering
	\includegraphics[width=0.95\columnwidth]{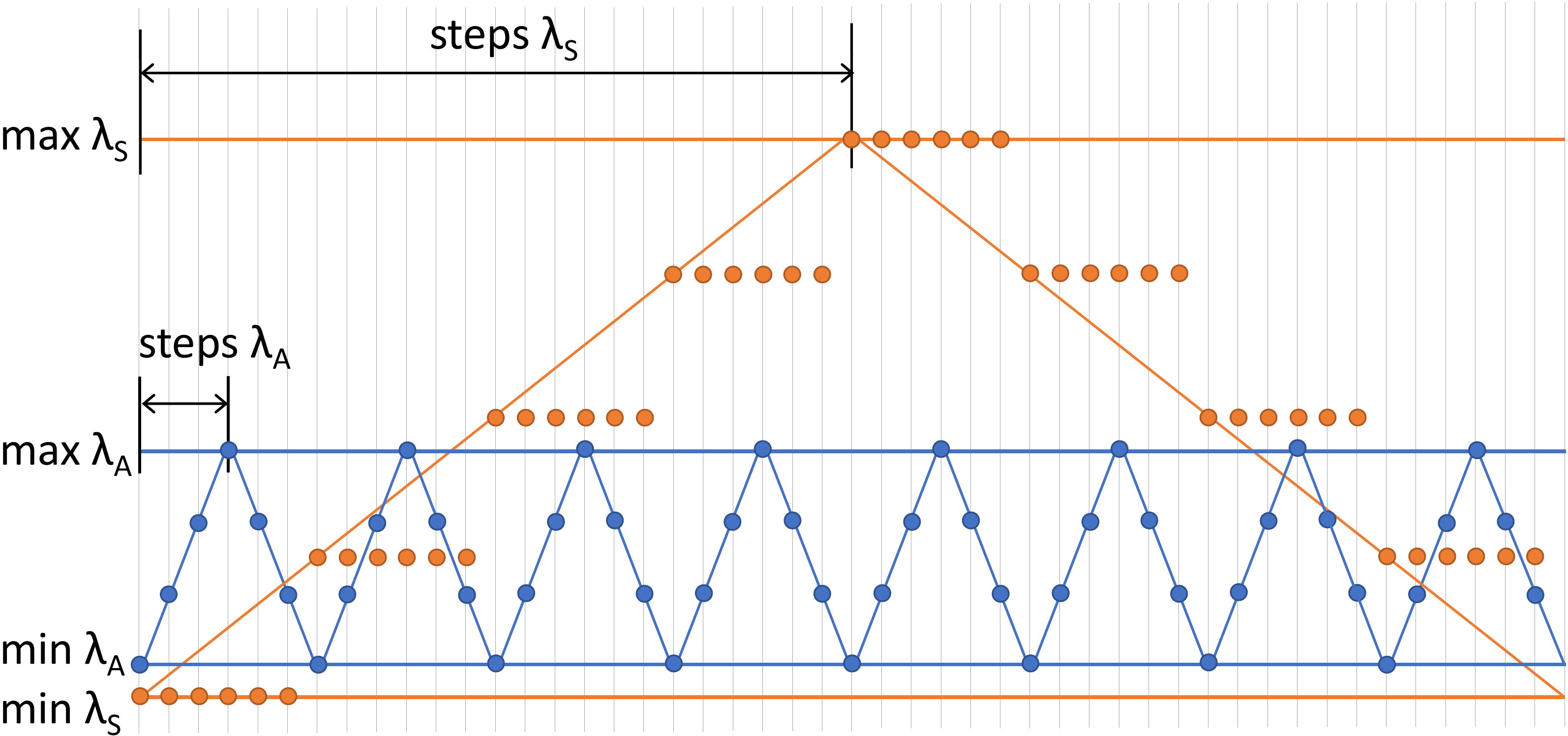}
\caption{The idea of applying cyclic hyperparameters \(\lambda_S\) (in orange) and \(\lambda_A\) (in blue) with the number of cycles \textit{cycles}\(\_\lambda_S\) = \textit{cycles}\(\_\lambda_A\) = $1$ and the numbers of steps \textit{steps}\(\_\lambda_S\) = $4$ and \textit{steps}\(\_\lambda_A\) = $3$. The number of stages is $48$.}
	\label{fig:DHS}
\end{figure}

The optimal settings of the FS layer hyperparameters depend on the properties of a particular dataset. In our experiments, we utilize the following ranges of \(\lambda_S\) and \(\lambda_A\), which can be combined: smaller \([0.001, 0.01]\), \([0.001, 0.02]\) and larger \([0.01, 0.1]\), \([0.01, 0.2]\). 
Experience shows that the best choice is to use one cycle of \(\lambda_S\) and two cycles of \(\lambda_A\) for each value of \(\lambda_S\). For high-sample datasets, we usually train $10$ epochs per stage, whereas for small-sample datasets, $1$ epoch is used per stage.

Although the FS layer weights are forced to be small, the weights of the next layers may increase during network training, and overfitting may occur, especially in the case of small-sample datasets.
To avoid this, a suitable regularization can be used in the layers following the FS layer.

The objective function \(F(\vec{\tilde{\theta}})\) in (\ref{_fs_13}) can be minimized with any optimizer. We use the Adam optimization algorithm~\cite{kingma2015} with learning rates of \(5.10^{-5}\), \(10^{-4}\), or \(10^{-3}\). For high sample-size datasets, the network can be trained with minibatches, which are also utilized for computing the variance of the FS layer activations.

\subsection{Optimal Model and FS}
\label{subsec:impl_opt}
We select the optimal model with the validation set. Before training, we split the dataset \(\vec{X}\) into a training set and a validation set with a ratio of $80:20$, and as an optimality criterion, we use the maximum prediction performance on the validation set determined by a suitable metric. If more models have the same performance, then the model with the smallest value of the objective function on the validation set is selected.

\iffalse
Therefore, when choosing the optimal model, we take into account only models whose average errors for conditions (\ref{_fs_1}) and (\ref{_fs_2}) per FS layer neuron do not exceed the given limit.
For the weight matrix \(\vec{W}\), we define the average error for condition (\ref{_fs_1}) as
%
\begin{equation}
\label{_fs_19}
\operatorname{E_S}(\vec{W}) = {\Omega_S(\vec{W})}/{\textit{dim}}
\end{equation}
%
and for condition (\ref{_fs_2}) as
%
\begin{equation}
\label{_fs_20}
\operatorname{E_A}(\vec{W}) = {\Omega_A(\vec{W})}/{\textit{dim}}. 
\end{equation}
%
\fi

In addition, we require the model that best satisfies conditions (\ref{_fs_1}) and (\ref{_fs_2}).
Therefore, when choosing the optimal model, we take into account only models whose average penalties for the breach of conditions (\ref{_fs_1}) and (\ref{_fs_2}) per FS layer neuron do not exceed the given limit.
For the weight matrix \(\vec{W}\), the average penalty for condition (\ref{_fs_1}) is the value \({\Omega_S(\vec{W})}/{\textit{dim}}\), and for condition (\ref{_fs_2}), it is the value \({\Omega_A(\vec{W})}/{\textit{dim}}\).
We use a~value of $0.3$ for both penalties in our experiments.

The FS layer weights of the optimal model are used to calculate the sum-weight saliency or alternatively the max-weight saliency values of all features (input variables). Through the saliency values, the presented method provides a~ranking of the features according to their importance. Then, the desired number of the most important features can be selected. Another possible approach is to select the features whose saliency values are greater than a given threshold. The advantage of either of these two saliency measures depends on the dataset used and the task to be solved.

\section{Experimental Results}
\label{sec:exp}
We examined the performance of the SNeL-FS method from two important aspects. We first evaluated its ability to identify features important for determining the target variable, and then we examined its influence on the prediction accuracy. The ability to identify relevant features was tested on synthetically generated datasets, where the required output is known, and thus FS methods can be evaluated regardless of the classifier used. Experiments to evaluate the improvement in the prediction performance after applying SNeL-FS were conducted on twelve publicly available real-world high-dimensional datasets.

In the numerical experiments, we demonstrated the usability of the proposed SNeL-FS method for binary and multiclass classification problems, as well as for regression problems. As a baseline for comparison, we included three frequently used FS methods, mRMR with a mutual information (MI) criterion~\cite{peng2005}, reliefF~\cite{kononenko2003}, and f-score~\cite{weir2012}.

All presented results of the SNeL-FS method were obtained by the computationally effective TensorFlow~\cite{Abadi2015} framework.

\subsection{Experimental Results on Artificial Datasets}
\label{subsec:exp_synt}
\subsubsection{Evaluation Method}
\label{subsec:exp_synt_met}
To measure the performance of the mentioned FS methods on artificial datasets, we used the index of success (\(Suc.\)), which evaluates how well an FS method selects known relevant features (true features)~\cite{bolon-canedo2013}. The \(Suc.\) score rewards the selection of relevant features and penalizes the inclusion of irrelevant features. The index of success is defined as follows:
\begin{equation}
\label{_fs_24}
Suc. = \frac{R_s}{R_t} - \alpha\frac{I_s}{I_t},
\end{equation}
where $R_s$ is the number of selected relevant features, $I_s$ is the number of selected irrelevant features, $R_t$ is the total number of relevant features, and $I_t$ represents the total number of irrelevant features.
The parameter \(\alpha = \min \left\{ \frac{1}{2}, \frac{R_t}{I_t} \right\}\) is used to express that the selection of irrelevant features is preferred to the omission of relevant ones. If FS ranks features by their importance and all the known relevant features are selected at first, we set \(Suc. = 1\), which is the best value.

\subsubsection{Datasets}
\label{subsec:exp_synt_data}
We compared the \(Suc.\) score of the FS methods on eight synthetic datasets, four of which constitute binary classification problems and four constitute regression problems.

The Madelon dataset presents a binary classification task with numerical features. It is constructed by clusters of randomly generated points normally distributed with the standard deviation of $1$ about some vertices of the hypercube in \(N_{inf}\)-dimensional space, where \(N_{inf}\) is the number of true features. All the points of one cluster have the same class, the class $0$ is assigned to half of the clusters and the class $1$ is assigned to the other half. In our experiments, we used a \(5\)-dimensional hypercube with edges of length $2*2$ and $4$ clusters per class.

The XOR dataset represents a binary classification task with binary features. The class value is determined by the logical XOR operation of two relevant features.

The linear regression dataset represents a regression problem, where the output is generated by applying a random linear regression model with \(N_{inf}\) nonzero regressors to the well-conditioned, centered, Gaussian input with unit variance.

The Friedman dataset is a nonlinear regression dataset with independent features uniformly distributed on the interval \([0,1]\). The target variable is created according to the rule for the Friedman $1$ dataset~\cite{friedman1991} with the polynomial and sine transformations of \(N_{inf} = 5\) true features.

In the experiments, we used the Madelon, XOR, linear regression, and Friedman datasets of two variants differing in the number of samples. Small-sample datasets (denoted as Mad, XOR, Reg, Fri) consist of $200$ samples, and high-sample datasets (Mad5k, XOR5k, Reg5k, Fri5k) contain 5,000 samples. All the datasets contain $500$ features (input variables). Their basic characteristics are described in Table~\ref{tab_art_datasets}.

\begin{table}[ht]
    \footnotesize
	% table caption is above the table
	\caption{Characteristics of the artificial datasets.}
	\label{tab_art_datasets}       % Give a unique label
	% For LaTeX tables use
	\centering
	%\hspace*{-10pt}
	%\tabcolsep=0.10cm
    \renewcommand{\arraystretch}{1.1}
	\begin{tabular}{|l|l|r|r|}
	    \hline % \midrule
		%\hline\noalign{\smallskip}
		Dataset  & Acronym & Samples & Features (\(N_{inf}\)) \\
		\hline % \midrule
		%\noalign{\smallskip}\hline\noalign{\smallskip}
		Madelon & Mad & 200 & 500 (5) \\
		%MadelonHD & MHD & 500 & 15,000 (15)\\
		XOR & XOR & 200 & 500 (2)  \\
		Lin. Regression & Reg & 200 & 500 (5) \\
		Friedman & Fri & 200 & 500 (5) \\[0.05 cm]
		\hline % \midrule
		Madelon 5k & Mad5k & 5,000 & 500 (5) \\
		XOR 5k & XOR5k & 5,000 & 500 (2) \\
		Lin. Regression 5k & Reg5k & 5,000 & 500 (5) \\
		Friedman 5k & Fri5k & 5,000 & 500 (5) \\
		\hline % \midrule
		%\noalign{\smallskip}\hline
	\end{tabular}
\end{table}

\subsubsection{SNeL-FS Setting}
\label{subsec:exp_synt_par}

For the high-sample datasets (denoted by $5k$), we used SNeL-FS based on the network with one hidden layer composed of $10$~ReLU neurons. For the small-sample datasets, we chose a network with two hidden layers, each with $5$~ReLU neurons, as the base. Between the input layer consisting of $500$ neurons and the first hidden layer, the FS layer with $15$ neurons was added. The number of FS layer neurons corresponds to the number of features returned by the FS method, which was determined as $3 \%$ according to~\cite{bolon-canedo2013}.

In the case of the high-sample datasets, we applied the universal range of \([0.01, 0.2]\) for the \(\lambda_S\) and \(\lambda_A\) hyperparameters with $19$ steps, and we trained $10$ epochs per stage.
For the small-sample datasets, Mad, Reg, and Fri, the same range for \(\lambda_S\) and \(\lambda_A\) was used, and only XOR required a smaller range of \([0.001, 0.02]\). For these four datasets, the number of steps was doubled to $38$, and the value of \textit{epochs\_per\_stage} was decreased to $1$.
For all datasets except XOR5k and Mad5k, we used \(\ell_1\) and \(\ell_2\) regularizations with the regularization parameters \(0.01\) in all layers following the FS layer.

As  criteria for selecting the optimal model, we utilized the maximum accuracy for classification and the minimum MSE for regression and evaluated the model on the validation set.

\subsubsection{Results}
\label{subsec:exp_synt_res}
The \(Suc.\) results of the SNeL-FS method are shown in Table~\ref{tab_art_results}. They are compared with the \(Suc.\) rates of three conventional FS methods. We present the SNeL-FS results achieved with the sum-weight saliency, and the \(Suc.\) rates determined by the max-weight saliency were similar. The best score for each dataset is emphasized in bold font.

\begin{table}[ht]
    \footnotesize
	% table caption is above the table
	\caption{Indices of success on eight artificial datasets.}
	\label{tab_art_results}       % Give a unique label
	% For LaTeX tables use
	\centering
	%\hspace*{-10pt}
	\tabcolsep=0.15cm
    \renewcommand{\arraystretch}{1.1}
	\begin{tabular}{|l|c|c|c|c|}
	    \hline % \midrule
		%\hline\noalign{\smallskip}
		Dataset & mRMR/MI & reliefF & f-score & SNeL-FS \\
		\hline % \midrule
		%\noalign{\smallskip}\hline\noalign{\smallskip}
		Mad   & 0.40          & 0.60          & 0.60          & \textbf{0.99} \\
		XOR   & 0.00          & 0.50          & 0.00          & \textbf{1.00} \\
		Reg   & 0.60          & 0.40          & 0.60          & \textbf{0.99} \\
		Fri   & 0.80          & \textbf{0.99} & 0.80          & 0.80          \\
		mean  & 0.45          & 0.62          & 0.50          & \textbf{0.95} \\
		\hline		       % \midrule
		Mad5k & 0.80          & \textbf{1.00} & 0.80          & \textbf{1.00} \\
		XOR5k & 0.00          & \textbf{1.00} & 0.00          & \textbf{1.00} \\
		Reg5k & 0.99          & 0.80          & \textbf{1.00} & \textbf{1.00} \\
		Fri5k & \textbf{1.00} & \textbf{1.00} & 0.99          & \textbf{1.00} \\
		mean  & 0.70          & 0.95          & 0.70          & \textbf{1.00} \\
		\hline % \midrule
		%\noalign{\smallskip}\hline
	\end{tabular}
\end{table}

% Reg5k, Fri5k bez L1 a L2 regularizacie Suc. = 0.99. Po pridani 1.00.
% Reg a Fri s rovnakymi parametrami ako Reg5k a Fri5k 0.99 a 0.80.

For the small-sample datasets, SNeL-FS obtained an average score of \(Suc. = 0.95\) and clearly outperformed the other three methods. For the high-sample datasets, the values of the \(Suc.\) scores are generally higher than the values on the small-sample datasets because the availability of a higher number of samples allows algorithms to better recognize patterns in data. The SNeL-FS method achieved an average \(Suc. = 1.00\) on the $5k$ datasets, which means that it identified all relevant features in all these datasets as important features. Among the other methods, the reliefF method obtained a result closest to the result of the SNeL-FS method with an average \(Suc. = 0.95\). A~more detailed analysis reveals the weaknesses of the univariate f-score, especially in identifying the relevant features of the XOR datasets. Similarly, the mRMR method cannot discover the two features that together determine the target variable.
On the other hand, on the XOR5k dataset, SNeL-FS found the FS layer weights that ideally satisfy conditions (\ref{_fs_1}) and (\ref{_fs_2}) and selected exactly two relevant features.

Table~\ref{tab_art_results} shows that the SNeL-FS method was able to detect all relevant features in the examined artificial datasets except the Friedman dataset (Fri), where $80 \%$ of the relevant features were identified. These results confirm the ability of the proposed method to identify relevant features in classification and regression tasks.

\subsection{Experimental Results on Real-world Datasets}
\label{subsec:exp_real}
\subsubsection{Datasets}
\label{subsec:exp_real_data}
The influence of the proposed FS method on the prediction performance was evaluated on publicly available  microarray datasets, which represent high-dimensional classification tasks characterized by a~small number of samples, imbalanced classes, and data complexity~\cite{bolon-canedo2014}. Our experiments were performed on eight binary and four multiclass microarray datasets, whose basic properties are described in Table \ref{tab_real_datasets}. The last column shows the number of classes in the datasets and, in brackets, the number of samples in each class.

\begin{table}[ht]
    \footnotesize
	% table caption is above the table
	\caption{Characteristics of the real-world datasets.}
	\label{tab_real_datasets}       % Give a unique label
	% For LaTeX tables use
	\centering
	%\hspace*{-10pt}
	%\tabcolsep=0.10cm
    \renewcommand{\arraystretch}{1.1}
	\begin{tabular}{|l|r|r|l|}
		\hline % \midrule
		%\hline\noalign{\smallskip}
		Dataset [Source] & Samples & Features & Classes (Samples)  
		\\ \hline % \midrule
		Colon \cite{Alon:1999} & 62~ & 2,000~ & 2 (40, 22)\\
		Crohn \cite{Burczynski} & 127~ & 22,283~ & 2 (85, 42) \\
		Breast Cancer \cite{Chin} & 118~ & 22,215~ & 2 (43, 75) \\
		Breast \cite{Chowdary} & 104~ & 22,283~ & 2 (62, 42) \\
		Leukemia \cite{Golub} & 72~ & 7,129~ & 2 (47, 25)\\
		Lung \cite{Gordon} & 181~ & 12,533~ & 2 (94, 87) \\
		Prostate \cite{Singh} &  102~ & 12,600~ & 2 (52, 50) \\
		Bone Lesion \cite{tian} &  173~ & 12,625~ & 2 (36, 137) \\
		SRBCT \cite{Khan2001}  & 83~  & 2,309~  & 4 (29, 11, 18, 25)\\
        Glioma \cite{Nutt2003}    & 50~  & 12,625~ & 4 (14, 7, 14, 15) \\
        MLL \cite{Armstrong2002}    & 72~  & 12,533~ & 3 (24, 20, 28) \\
        Lung Cancer \cite{Bhattacharjee2001}  & 203~ & 12,600~ & 5 (139, 17, 6, 21, 20) \\
		\hline % \midrule
		%\noalign{\smallskip}\hline
	\end{tabular}
\end{table}

\subsubsection{Evaluation Method}
\label{subsec:exp_real_met}
When evaluating the impact of the FS methods on prediction performance, several machine learning algorithms are usually used to obtain an objective assessment. We employed four well-known algorithms based on different underlying concepts, namely, the Gaussian naive Bayes (NB) classifier, the support vector classifier (SVC) with a radial basis function (RBF) kernel and a penalty parameter \(C = 1\), the random forest (RF) classifier with 1,000 base estimators and the entropy function to measure the quality of a split, and the \(k\)-nearest neighbors (kNN) classifier with \(k = 5\).

Because most of the datasets examined present classification problems with a class imbalance, we used the \(F_1\) score as a~measure of the prediction performance of the classifiers. The \(F_1\) score is defined as a harmonic mean of the precision and recall, i.e., \(F_1 = 2 . \frac{precision . recall}{precision + recall}\), and expresses the balance between them. For multiclass problems, we calculated scores for each label and found their average weighted by support, which is the number of true samples for each label.

We used \(k\)-fold stratified cross validation (CV) to validate the results, where \(k\) was set to $10$ for the binary datasets and decreased to $5$ for the multiclass datasets due to the very small number of samples in some classes. The individual feature selectors were included in the CV loop.
For each CV fold, a feature subset was obtained by an FS method with the respective training data, and then the chosen classifier was trained on the same training data with selected features. To evaluate the \(F_1\) score, the testing data for the fold were used. The final \(F_1\) score was achieved by averaging the scores over all CV folds. This FS protocol avoids biased estimations of the prediction performance~\cite{kuncheva2018}.

\subsubsection{SNeL-FS Settings}
\label{subsec:exp_real_par}
For all real-world datasets used, we utilized a~network with one hidden layer consisting of $10$ or $20$ ReLU neurons as a basis. Twenty hidden neurons were applied for the glioma and lung cancer datasets.
The FS methods selected the $30$ most important features, so the FS layer with $30$ neurons was included. This number was derived from~\cite{bolon-canedo2014}, where the ranker FS methods applied on DNA microarray datasets selecting the top $10$ and $50$ features were compared.

The FS layer hyperparameter settings were optimized for each dataset with respect to the \(F_1\) score obtained by CV and averaged over all four classifiers used.
We typically utilized the range of \([0.01, 0.1]\) for \(\lambda_S\) and \(\lambda_A\) with $18$ steps.
For the MLL dataset, the range for \(\lambda_S\) was increased to \([0.01, 0.2]\) with $19$ steps, and for the breast cancer and glioma datasets, this range was used for both \(\lambda_S\) and \(\lambda_A\). The smaller interval of \([0.001, 0.01]\) was utilized for the breast and leukemia datasets, and \([0.001, 0.02]\) was used for the lung dataset.
Additionally, the regularization parameters for \(\ell_1\) and \(\ell_2\) regularizations applied to the other layers were fine tuned. The networks were trained with $1$ epoch per stage.

For each CV fold, the respective training data were divided into training and validation parts with a ratio of \(80:20\), and the optimal model for a given fold was chosen according to the maximum \(F_1\) score on the validation set.

\subsubsection{Results}
\label{subsec:exp_real_res}
To evaluate the performance of the proposed FS method, three conventional FS methods, mRMR with the MI criterion, reliefF, and f-score, were compared. 
The \(F_1\) prediction scores after applying the FS methods are presented in Table~\ref{tab_real_results}.
For each examined dataset, we provide the individual \(F_1\) score for each classifier computed as a mean along with the standard deviation of CV as well as the average of the four applied classifiers. For SNeL-FS, the results achieved with the max-weight saliency are shown.

In the last row of Table~\ref{tab_real_results}, we present the results of the win/tie/loss (WTL) statistics that represent the number of datasets for which the average \(F_1\) score obtained after applying the SNeL-FS method is greater than, equal to or less than the average \(F_1\) score achieved by performing the respective classical method.
The results show that SNeL-FS significantly outperforms the other FS methods. With the sign test~\cite{demsar2006}, the null hypothesis that SNeL-FS and any compared method are equivalent is rejected at~an~$0.05$ level of significance.

The summary WTL statistic in the last column of the last row of Table~\ref{tab_real_results} compares the average \(F_1\) score of the SNeL-FS method with the best average scores among the three classical FS methods for each particular dataset. The table shows that SNeL-FS is almost always better than the conventional FS methods, and the two exceptions are only on the breast cancer and lung datasets. However, it can be seen that the results on the lung dataset are balanced, and all the methods achieved an average \(F_1\) score of approximately $99 \%$. The best average score, obtained by  mRMR, outperformed the average score of SNeL-FS by less than \(0.1 \%\). In contrast, SNeL-FS achieved the best score with the SVC and kNN classifiers on this dataset.
In the case of the breast cancer dataset, the results are also very balanced.
Additionally, SNeL-FS and f-score obtained the best average \(F_1\) scores of $91.57 \%$.

We can conclude that the proposed method performed better in terms of the \(F_1\) score than the other evaluated FS methods when the specified settings were used. 
This is consistent with the results obtained from the artificial data, where SNeL-FS showed a~higher rate of detection for the relevant variables.

When comparing the results in terms of the two proposed saliency measures, for the real-world datasets, better \(F_1\) scores were obtained by applying the max-weight saliency. For the synthetic datasets, the results of the \(Suc.\) score based on the sum-weight saliency and the max-weight saliency were almost equivalent.

\begin{table*}[ht]
    \footnotesize
	% table caption is above the table
	\caption{F1 scores on the real-world datasets. 30 features were selected.}
	\label{tab_real_results}       % Give a unique label
	% For LaTeX tables use
	\centering
	%\hspace*{-10pt}
	%\tabcolsep=0.10cm
    \renewcommand{\arraystretch}{1.1}
	\begin{tabular}{|l|l|c|c|c|c|c|}
		\hline % \midrule
		%\hline\noalign{\smallskip}
Dataset    & Classif. & no FS               & mRMR/MI               & reliefF               & f-score               & SNeL-FS\\    
\hline % \midrule
           & NB    & 0.5867 \(\pm\) 0.1384 & 0.7624 \(\pm\) 0.1992 & \textbf{0.7957 \(\pm\) 0.1632} & 0.7850 \(\pm\) 0.1624 & 0.7790 \(\pm\) 0.1829\\
           & SVC   & 0.5300 \(\pm\) 0.3761 & 0.6924 \(\pm\) 0.2838 & 0.7957 \(\pm\) 0.1632 & 0.7757 \(\pm\) 0.1486 & \textbf{0.8290 \(\pm\) 0.1674}\\
Colon      & RF    & 0.6867 \(\pm\) 0.2810 & \textbf{0.8090 \(\pm\) 0.1575} & 0.7757 \(\pm\) 0.1486 & 0.7900 \(\pm\) 0.1620 & 0.7757 \(\pm\) 0.1486\\
           & kNN   & 0.4267 \(\pm\) 0.3518 & 0.7257 \(\pm\) 0.1488 & 0.6957 \(\pm\) 0.2753 & 0.7757 \(\pm\) 0.1486 & \textbf{0.7790 \(\pm\) 0.1829}\\
           & avg   & 0.5575 \(\pm\) 0.2868 & 0.7474 \(\pm\) 0.1973 & 0.7657 \(\pm\) 0.1876 & 0.7816 \(\pm\) 0.1554 & \textbf{0.7907 \(\pm\) 0.1704}\\
\hline % \midrule
           & NB    & 0.7887 \(\pm\) 0.2007 & \textbf{0.9294 \(\pm\) 0.0813} & 0.7654 \(\pm\) 0.1214 & 0.9020 \(\pm\) 0.0923 & 0.9126 \(\pm\) 0.0845\\
           & SVC   & 0.8403 \(\pm\) 0.2276 & 0.9603 \(\pm\) 0.0612 & 0.8909 \(\pm\) 0.1096 & 0.9524 \(\pm\) 0.0590 & \textbf{0.9635 \(\pm\) 0.0564}\\
Crohn      & RF    & 0.8873 \(\pm\) 0.1459 & 0.9181 \(\pm\) 0.1117 & 0.8083 \(\pm\) 0.1329 & \textbf{0.9210 \(\pm\) 0.0859} & 0.9159 \(\pm\) 0.1029\\
           & kNN   & 0.8644 \(\pm\) 0.0863 & 0.9429 \(\pm\) 0.0700 & 0.8667 \(\pm\) 0.1116 & 0.9496 \(\pm\) 0.0836 & \textbf{0.9635 \(\pm\) 0.0564}\\
           & avg   & 0.8452 \(\pm\) 0.1651 & 0.9377 \(\pm\) 0.0810 & 0.8328 \(\pm\) 0.1189 & 0.9313 \(\pm\) 0.0802 & \textbf{0.9389 \(\pm\) 0.0750}\\
\hline % \midrule
           & NB    & 0.8895 \(\pm\) 0.0616 & 0.9055 \(\pm\) 0.0606 & 0.9058 \(\pm\) 0.0749 & \textbf{0.9124 \(\pm\) 0.0748} & \textbf{0.9124 \(\pm\) 0.0748}\\
           & SVC   & 0.8941 \(\pm\) 0.0567 & 0.9150 \(\pm\) 0.0594 & \textbf{0.9215 \(\pm\) 0.0521} & \textbf{0.9215 \(\pm\) 0.0521} & 0.9163 \(\pm\) 0.0525\\
Breast-    & RF    & 0.9163 \(\pm\) 0.0525 & \textbf{0.9249 \(\pm\) 0.0542} & 0.9198 \(\pm\) 0.0607 & 0.9073 \(\pm\) 0.0556 & 0.9139 \(\pm\) 0.0553\\
Cancer     & kNN   & 0.8876 \(\pm\) 0.0605 & 0.8955 \(\pm\) 0.0525 & 0.9138 \(\pm\) 0.0451 & \textbf{0.9215 \(\pm\) 0.0521} & 0.9201 \(\pm\) 0.0531\\
           & avg   & 0.8969 \(\pm\) 0.0578 & 0.9102 \(\pm\) 0.0567 & 0.9152 \(\pm\) 0.0582 & \textbf{0.9157 \(\pm\) 0.0586} & \textbf{0.9157 \(\pm\) 0.0589}\\
\hline % \midrule
           & NB    & 0.8816 \(\pm\) 0.0984 & 0.9201 \(\pm\) 0.0707 & 0.9657 \(\pm\) 0.0698 & 0.9556 \(\pm\) 0.1018 & \textbf{0.9746 \(\pm\) 0.0513}\\
           & SVC   & 0.9460 \(\pm\) 0.0667 & 0.9455 \(\pm\) 0.0716 & 0.9464 \(\pm\) 0.0864 & 0.9639 \(\pm\) 0.0786 & \textbf{0.9746 \(\pm\) 0.0513}\\
Breast     & RF    & 0.9496 \(\pm\) 0.0836 & \textbf{0.9746 \(\pm\) 0.0513} & 0.9464 \(\pm\) 0.0864 & \textbf{0.9746 \(\pm\) 0.0513} & \textbf{0.9746 \(\pm\) 0.0513}\\
           & kNN   & 0.9103 \(\pm\) 0.0989 & 0.9460 \(\pm\) 0.0667 & 0.9492 \(\pm\) 0.0630 & \textbf{0.9746 \(\pm\) 0.0513} & 0.9635 \(\pm\) 0.0564\\
           & avg   & 0.9219 \(\pm\) 0.0869 & 0.9466 \(\pm\) 0.0651 & 0.9519 \(\pm\) 0.0764 & 0.9672 \(\pm\) 0.0708 & \textbf{0.9718 \(\pm\) 0.0526}\\
\hline % \midrule
           & NB    & 0.9800 \(\pm\) 0.0600 & 0.9217 \(\pm\) 0.1234 & \textbf{0.9350 \(\pm\) 0.1001} & \textbf{0.9350 \(\pm\) 0.1001} & \textbf{0.9350 \(\pm\) 0.1001}\\
           & SVC   & 0.6233 \(\pm\) 0.3426 & 0.9514 \(\pm\) 0.0756 & 0.9514 \(\pm\) 0.0756 & 0.9057 \(\pm\) 0.1821 & \textbf{0.9657 \(\pm\) 0.0698}\\
Leukemia   & RF    & 0.9800 \(\pm\) 0.0600 & 0.9067 \(\pm\) 0.1200 & \textbf{0.9657 \(\pm\) 0.0698} & 0.9200 \(\pm\) 0.1833 & 0.9600 \(\pm\) 0.0800\\
           & kNN   & 0.6300 \(\pm\) 0.2747 & \textbf{0.9200 \(\pm\) 0.1833} & \textbf{0.9200 \(\pm\) 0.1833} & 0.8867 \(\pm\) 0.1956 & \textbf{0.9200 \(\pm\) 0.1833}\\
           & avg   & 0.8033 \(\pm\) 0.1843 & 0.9249 \(\pm\) 0.1256 & 0.9430 \(\pm\) 0.1072 & 0.9118 \(\pm\) 0.1653 & \textbf{0.9452 \(\pm\) 0.1083}\\
\hline % \midrule
           & NB    & 0.9901 \(\pm\) 0.0212 & \textbf{1.0000 \(\pm\) 0.0000} & 0.9825 \(\pm\) 0.0286 & 0.9894 \(\pm\) 0.0227 & 0.9931 \(\pm\) 0.0138\\
           & SVC   & 0.9864 \(\pm\) 0.0166 & 0.9933 \(\pm\) 0.0134 & \textbf{0.9968 \(\pm\) 0.0097} & 0.9935 \(\pm\) 0.0129 & \textbf{0.9968 \(\pm\) 0.0097}\\
Lung       & RF    & 0.9968 \(\pm\) 0.0097 & \textbf{0.9968 \(\pm\) 0.0097} & 0.9935 \(\pm\) 0.0129 & \textbf{0.9968 \(\pm\) 0.0097} & 0.9935 \(\pm\) 0.0129\\
           & kNN   & 0.9471 \(\pm\) 0.0271 & 0.9903 \(\pm\) 0.0148 & 0.9808 \(\pm\) 0.0210 & \textbf{0.9935 \(\pm\) 0.0129} & \textbf{0.9935 \(\pm\) 0.0129}\\
           & avg   & 0.9801 \(\pm\) 0.0187 & \textbf{0.9951 \(\pm\) 0.0095} & 0.9884 \(\pm\) 0.0180 & 0.9933 \(\pm\) 0.0146 & 0.9942 \(\pm\) 0.0123\\
\hline % \midrule
           & NB    & 0.7120 \(\pm\) 0.0825 & 0.9075 \(\pm\) 0.1058 & 0.9018 \(\pm\) 0.0985 & \textbf{0.9184 \(\pm\) 0.0996} & \textbf{0.9184 \(\pm\) 0.0996}\\
           & SVC   & 0.8826 \(\pm\) 0.1043 & 0.9070 \(\pm\) 0.0876 & 0.9181 \(\pm\) 0.0916 & 0.9206 \(\pm\) 0.0926 & \textbf{0.9290 \(\pm\) 0.0829}\\
Prostate   & RF    & 0.9226 \(\pm\) 0.0921 & \textbf{0.9305 \(\pm\) 0.0953} & 0.9290 \(\pm\) 0.0829 & 0.9215 \(\pm\) 0.0877 & 0.9270 \(\pm\) 0.0836\\
           & kNN   & 0.8277 \(\pm\) 0.1043 & 0.9247 \(\pm\) 0.0538 & 0.9124 \(\pm\) 0.0838 & 0.9270 \(\pm\) 0.0836 & \textbf{0.9290 \(\pm\) 0.0829}\\
           & avg   & 0.8362 \(\pm\) 0.0958 & 0.9174 \(\pm\) 0.0856 & 0.9153 \(\pm\) 0.0892 & 0.9219 \(\pm\) 0.0909 & \textbf{0.9259 \(\pm\) 0.0873}\\
\hline % \midrule
           & NB    & 0.8472 \(\pm\) 0.0624 & \textbf{0.8465 \(\pm\) 0.0561} & 0.7854 \(\pm\) 0.0696 & 0.8243 \(\pm\) 0.0603 & 0.8427 \(\pm\) 0.0507\\
           & SVC   & 0.8843 \(\pm\) 0.0115 & 0.8736 \(\pm\) 0.0406 & 0.8435 \(\pm\) 0.0539 & 0.8658 \(\pm\) 0.0454 & \textbf{0.8907 \(\pm\) 0.0324}\\
Bone-      & RF    & 0.8843 \(\pm\) 0.0115 & 0.8796 \(\pm\) 0.0474 & 0.8849 \(\pm\) 0.0246 & 0.8782 \(\pm\) 0.0311 & \textbf{0.8885 \(\pm\) 0.0198}\\
Lesion     & kNN   & 0.8794 \(\pm\) 0.0267 & 0.8720 \(\pm\) 0.0220 & 0.8786 \(\pm\) 0.0305 & 0.8782 \(\pm\) 0.0415 & \textbf{0.8830 \(\pm\) 0.0272}\\
           & avg   & 0.8738 \(\pm\) 0.0280 & 0.8679 \(\pm\) 0.0415 & 0.8481 \(\pm\) 0.0447 & 0.8616 \(\pm\) 0.0446 & \textbf{0.8762 \(\pm\) 0.0325}\\
\hline % \midrule
           & NB    & 0.9726 \(\pm\) 0.0338 & 0.9482 \(\pm\) 0.0264 & 0.7971 \(\pm\) 0.1188 & 0.9526 \(\pm\) 0.0447 & \textbf{0.9868 \(\pm\) 0.0264}\\
           & SVC   & 0.9279 \(\pm\) 0.0525 & 0.9511 \(\pm\) 0.0247 & 0.8319 \(\pm\) 0.1314 & 0.9769 \(\pm\) 0.0283 & \textbf{0.9869 \(\pm\) 0.0261}\\
SRBCT      & RF    & 1.0000 \(\pm\) 0.0000 & 0.9767 \(\pm\) 0.0285 & 0.8576 \(\pm\) 0.0988 & \textbf{0.9887 \(\pm\) 0.0226} & \textbf{0.9887 \(\pm\) 0.0226}\\
           & kNN   & 0.7938 \(\pm\) 0.0699 & 0.9399 \(\pm\) 0.0635 & 0.8170 \(\pm\) 0.0855 & \textbf{1.0000 \(\pm\) 0.0000} & 0.9756 \(\pm\) 0.0300\\
           & avg   & 0.9236 \(\pm\) 0.0390 & 0.9540 \(\pm\) 0.0358 & 0.8259 \(\pm\) 0.1086 & 0.9795 \(\pm\) 0.0239 & \textbf{0.9845 \(\pm\) 0.0263}\\
\hline % \midrule
           & NB    & 0.6863 \(\pm\) 0.0752 & 0.6235 \(\pm\) 0.1354 & 0.5620 \(\pm\) 0.1408 & 0.6655 \(\pm\) 0.1764 & \textbf{0.7174 \(\pm\) 0.1093}\\
           & SVC   & 0.6131 \(\pm\) 0.0815 & 0.6300 \(\pm\) 0.1394 & 0.6428 \(\pm\) 0.1579 & 0.6758 \(\pm\) 0.1011 & \textbf{0.7552 \(\pm\) 0.0746}\\
Glioma     & RF    & 0.6474 \(\pm\) 0.1307 & 0.7104 \(\pm\) 0.1737 & 0.6024 \(\pm\) 0.1551 & 0.6498 \(\pm\) 0.0805 & \textbf{0.7753 \(\pm\) 0.0969}\\
           & kNN   & 0.5711 \(\pm\) 0.1014 & 0.6028 \(\pm\) 0.1971 & 0.7438 \(\pm\) 0.0415 & 0.7226 \(\pm\) 0.1300 & \textbf{0.7583 \(\pm\) 0.0542}\\
           & avg   & 0.6295 \(\pm\) 0.0972 & 0.6417 \(\pm\) 0.1614 & 0.6378 \(\pm\) 0.1238 & 0.6784 \(\pm\) 0.1220 & \textbf{0.7515 \(\pm\) 0.0837}\\
\hline % \midrule
           & NB    & 0.9598 \(\pm\) 0.0328 & 0.9714 \(\pm\) 0.0352 & 0.9714 \(\pm\) 0.0352 & 0.9264 \(\pm\) 0.0492 & \textbf{0.9868 \(\pm\) 0.0264}\\
           & SVC   & 0.8913 \(\pm\) 0.0545 & 0.9430 \(\pm\) 0.0554 & 0.9321 \(\pm\) 0.0422 & 0.9587 \(\pm\) 0.0337 & \textbf{0.9598 \(\pm\) 0.0328}\\
MLL        & RF    & 0.9868 \(\pm\) 0.0264 & 0.9576 \(\pm\) 0.0348 & 0.9593 \(\pm\) 0.0333 & 0.9587 \(\pm\) 0.0337 & \textbf{0.9730 \(\pm\) 0.0330}\\
           & kNN   & 0.9072 \(\pm\) 0.0670 & \textbf{1.0000 \(\pm\) 0.0000} & 0.9450 \(\pm\) 0.0276 & 0.9431 \(\pm\) 0.0287 & 0.9550 \(\pm\) 0.0590\\
           & avg   & 0.9363 \(\pm\) 0.0452 & 0.9680 \(\pm\) 0.0313 & 0.9519 \(\pm\) 0.0346 & 0.9468 \(\pm\) 0.0363 & \textbf{0.9687 \(\pm\) 0.0378}\\
\hline % \midrule
           & NB    & 0.8847 \(\pm\) 0.0450 & 0.9166 \(\pm\) 0.0355 & 0.8076 \(\pm\) 0.0309 & 0.7946 \(\pm\) 0.0598 & \textbf{0.9262 \(\pm\) 0.0452}\\
           & SVC   & 0.8589 \(\pm\) 0.0566 & \textbf{0.9289 \(\pm\) 0.0508} & 0.7824 \(\pm\) 0.0618 & 0.7481 \(\pm\) 0.0492 & 0.9162 \(\pm\) 0.0325\\
Lung       & RF    & 0.8835 \(\pm\) 0.0682 & 0.9315 \(\pm\) 0.0276 & 0.7968 \(\pm\) 0.0705 & 0.8686 \(\pm\) 0.0528 & \textbf{0.9363 \(\pm\) 0.0235}\\
Cancer     & kNN   & 0.8767 \(\pm\) 0.0471 & 0.8996 \(\pm\) 0.0641 & 0.8293 \(\pm\) 0.0752 & 0.8572 \(\pm\) 0.0418 & \textbf{0.9163 \(\pm\) 0.0430}\\
           & avg   & 0.8759 \(\pm\) 0.0542 & 0.9191 \(\pm\) 0.0445 & 0.8040 \(\pm\) 0.0596 & 0.8171 \(\pm\) 0.0509 & \textbf{0.9238 \(\pm\) 0.0360}\\
\hline % \midrule
\textbf{WTL} & \textbf{avg} &         &       \textbf{11/0/1} &      \textbf{12/0/0} &       \textbf{11/1/0} & \textbf{10/1/1}\\
%WTL        & max   &                  &                       &                       &                       & 6/2/4\\

\hline % \midrule
		%\noalign{\smallskip}\hline
	\end{tabular}
\end{table*}
%\clearpage

\subsubsection{Comparison of Computational Performance}
\label{subsec:exp_real_comp}
One of the advantages of FS is that it saves computational resources. Table~\ref{tab_real_comp} compares the training times of the individual learning algorithms used in this section before and after the FS method was applied. As a training set, we chose the breast cancer dataset with 22,215 features and 118 samples represented by a matrix with 2,621,370 real numbers. After selecting $30$ features, the dataset is reduced to a new dataset with 3,540 real numbers.

\begin{table}[H]
    \footnotesize
	% table caption is above the table
	\caption{Comparison of computational performance.}
	\label{tab_real_comp}       % Give a unique label
	% For LaTeX tables use
	\centering
	%\hspace*{-10pt}
	\tabcolsep=0.15cm
    \renewcommand{\arraystretch}{1.1}
	\begin{tabular}{|l|c|c|c|c|}
	    \hline % \midrule
		%\hline\noalign{\smallskip}
		Classif. & Training time & Training time & Acceleration   \\
		algorithm &  before FS (s) &  after FS (s) &  ratio  \\
		\hline % \midrule
		%\noalign{\smallskip}\hline\noalign{\smallskip}
		NB   & 0.04635  & 0.00062    & ~74.47 \\
		SVC  & 0.47195  & 0.00099    & 476.27 \\
		RF   & 5.25366  & 0.98497    & ~~5.33 \\
		k-NN & 0.09270  & 0.00076    & 122.05 \\
		\hline % \midrule
		%\noalign{\smallskip}\hline
	\end{tabular}
\end{table}

The experiments were performed on an Intel Core i5-8250U CPU with 8.00 GB RAM and \textit{scikit-learn} classifiers~\cite{scikit-learn2011}. The displayed results are the averages of 200 repetitions.

\section{Discussion}
\label{discus}
Although the experiments demonstrate the effectiveness of the SNeL-FS method, there are several open possibilities for further research. The intended use of SNeL-FS presented in this paper is as a layer of a fully connected neural network. The extension of SNeL-FS for specific types of neural networks, such as convolutional neural networks (CNNs), is not straightforward. Generally, the application of any FS for natural images where the object of interest can have an arbitrary position is not expected to be beneficial. FS can be useful in cases where the object of interest is centered in images, as is frequently the case in medical imaging, such as X-ray and computer tomography. However, also in this case, the embedding of the FS layer is not trivial since a CNN works with a specific structure of features that are processed by filters of particular size. Leaving some features can disrupt this structure, which is crucial for CNNs. Since CNNs are a popular topic, the implementation of FS in CNNs will be the object of future research.

Moreover, the original implementation of SNeL-FS can be compared to the implementation with a smoothed maximum in the penalty terms of the objective function. Other criteria for the selection of the optimal model and other optimization algorithms can also be considered. The approach with cyclic hyperparameters appears to be promising; hence, it can be useful to focus on a more effective determination of their range and an analysis of the effect against overfitting. A deeper comparison of the introduced saliency measures is another direction for further research.

\section{Conclusion}
\label{concl}
FS is an important data preprocessing strategy applied in many data mining and machine learning problems. FS methods select a subset of relevant features from original, often high-dimensional, data, thereby improving the learning performance, decreasing the computational requirements, and building better generalization models. Neural networks have the built-in ability to reduce the naturally embedded dimensions; however, the results are difficult to interpret, and further analyses are problematic. On the other hand, neural networks can be used as a basis for FS methods to preserve some original features and provide better readability and interpretability of models.

This paper presented a new supervised FS method, SNeL-FS, which uses neural networks to select important input variables. The method came from an idea inspired by the batch normalization approach. SNeL-FS constructs a special network layer, which is forced to be sparse through two constraints leading to the standardization of its activations, for variable selection. To evaluate the importance of features, two novel saliency measures based on FS layer weights were introduced.

Numerical experiments were performed on two types of data: eight artificial datasets and twelve high-dimensional real-world datasets. The results show that the proposed method is suitable for binary and multiclass classification problems as well as for regression problems. On the synthetically generated datasets, the SNeL-FS method effectively identified the relevant features. The results on the real-world datasets proved that the method is able to efficiently reduce dimensionality and achieve the best predictive performance in terms of the average \(F_1\) score. All the experiments were executed by effective GPU implementation with the TensorFlow framework.

% use section* for acknowledgment
\section*{Acknowledgment}

This work was supported by the Scientific Grant Agency of the Ministry of Education, Science, Research and Sport of the Slovak Republic and the Slovak Academy of Sciences under contract VEGA 1/0327/20.

% Can use something like this to put references on a page
% by themselves when using endfloat and the captionsoff option.
\ifCLASSOPTIONcaptionsoff
\newpage
\fi

\bibliographystyle{IEEEtran}
% argument is your BibTeX string definitions and bibliography database(s)
\bibliography{pb_nnet_fs.bib}

% Generated by IEEEtran.bst, version: 1.14 (2015/08/26)
\begin{thebibliography}{10}
\providecommand{\url}[1]{#1}
\csname url@samestyle\endcsname
\providecommand{\newblock}{\relax}
\providecommand{\bibinfo}[2]{#2}
\providecommand{\BIBentrySTDinterwordspacing}{\spaceskip=0pt\relax}
\providecommand{\BIBentryALTinterwordstretchfactor}{4}
\providecommand{\BIBentryALTinterwordspacing}{\spaceskip=\fontdimen2\font plus
\BIBentryALTinterwordstretchfactor\fontdimen3\font minus
  \fontdimen4\font\relax}
\providecommand{\BIBforeignlanguage}[2]{{%
\expandafter\ifx\csname l@#1\endcsname\relax
\typeout{** WARNING: IEEEtran.bst: No hyphenation pattern has been}%
\typeout{** loaded for the language `#1'. Using the pattern for}%
\typeout{** the default language instead.}%
\else
\language=\csname l@#1\endcsname
\fi
#2}}
\providecommand{\BIBdecl}{\relax}
\BIBdecl

\bibitem{li2017C}
\BIBentryALTinterwordspacing
J.~Li, K.~Cheng, S.~Wang, F.~Morstatter, R.~P. Trevino, J.~Tang, and H.~Liu,
  ``Feature selection: A data perspective,'' \emph{ACM Comput. Surv.}, vol.~50,
  no.~6, 12 2017. [Online]. Available: \url{https://doi.org/10.1145/3136625}
\BIBentrySTDinterwordspacing

\bibitem{hastie2001}
T.~Hastie, R.~Tibshirani, and J.~Friedman, \emph{The Elements of Statistical
  Learning}, ser. Springer Series in Statistics.\hskip 1em plus 0.5em minus
  0.4em\relax New York, NY, USA: Springer New York Inc., 2001.

\bibitem{wang2020}
C.~{Wang}, Y.~{Huang}, M.~{Shao}, Q.~{Hu}, and D.~{Chen}, ``Feature selection
  based on neighborhood self-information,'' \emph{IEEE Transactions on
  Cybernetics}, vol.~50, no.~9, pp. 4031--4042, 2020.

\bibitem{liu2020}
C.~{Liu}, C.~{Zheng}, S.~{Wu}, Z.~{Yu}, and H.~{Wong}, ``Multitask feature
  selection by graph-clustered feature sharing,'' \emph{IEEE Transactions on
  Cybernetics}, vol.~50, no.~1, pp. 74--86, 2020.

\bibitem{adeli2020}
E.~{Adeli}, X.~{Li}, D.~{Kwon}, Y.~{Zhang}, and K.~M. {Pohl}, ``Logistic
  regression confined by cardinality-constrained sample and feature
  selection,'' \emph{IEEE Transactions on Pattern Analysis and Machine
  Intelligence}, vol.~42, no.~7, pp. 1713--1728, 2020.

\bibitem{li2017}
\BIBentryALTinterwordspacing
J.~Li and H.~Liu, ``Challenges of feature selection for big data analytics,''
  \emph{IEEE Intelligent Systems}, vol.~32, no.~2, pp. 9--15, 03 2017.
  [Online]. Available:
  \url{https://ieeexplore.ieee.org/stamp/stamp.jsp?tp=arnumber=7887649}
\BIBentrySTDinterwordspacing

\bibitem{cai2018}
\BIBentryALTinterwordspacing
J.~Cai, J.~Luo, S.~Wang, and S.~Yang, ``Feature selection in machine learning:
  A new perspective,'' \emph{Neurocomputing}, vol. 300, pp. 70 -- 79, 2018.
  [Online]. Available:
  \url{http://www.sciencedirect.com/science/article/pii/S0925231218302911}
\BIBentrySTDinterwordspacing

\bibitem{li2017B}
\BIBentryALTinterwordspacing
Y.~Li, T.~Li, and H.~Liu, ``Recent advances in feature selection and its
  applications,'' \emph{Knowledge and Information Systems}, vol.~53, no.~3, pp.
  551--577, 12 2017. [Online]. Available:
  \url{https://doi.org/10.1007/s10115-017-1059-8}
\BIBentrySTDinterwordspacing

\bibitem{sun2017}
K.~{Sun}, S.~{Huang}, D.~S. {Wong}, and S.~{Jang}, ``Design and application of
  a variable selection method for multilayer perceptron neural network with
  lasso,'' \emph{IEEE Transactions on Neural Networks and Learning Systems},
  vol.~28, no.~6, pp. 1386--1396, 2017.

\bibitem{li2017D}
F.~{Li}, J.~M. {Zurada}, Y.~{Liu}, and W.~{Wu}, ``Input layer regularization of
  multilayer feedforward neural networks,'' \emph{IEEE Access}, vol.~5, pp.
  10\,979--10\,985, 2017.

\bibitem{wasserman2015}
Y.~Li, C.-Y. Chen, and W.~W. Wasserman, ``Deep feature selection: Theory and
  application to identify enhancers and promoters,'' in \emph{Research in
  Computational Molecular Biology}, T.~M. Przytycka, Ed.\hskip 1em plus 0.5em
  minus 0.4em\relax Cham: Springer International Publishing, 2015, pp.
  205--217.

\bibitem{zou2005}
\BIBentryALTinterwordspacing
H.~Zou and T.~Hastie, ``Regularization and variable selection via the elastic
  net,'' \emph{Journal of the Royal Statistical Society. Series B (Statistical
  Methodology)}, vol.~67, no.~2, pp. 301--320, 2005. [Online]. Available:
  \url{http://www.jstor.org/stable/3647580}
\BIBentrySTDinterwordspacing

\bibitem{belue1995}
\BIBentryALTinterwordspacing
L.~M. Belue and K.~W. Bauer, ``Determining input features for multilayer
  perceptrons,'' \emph{Neurocomputing}, vol.~7, no.~2, pp. 111--121, 03 1995.
  [Online]. Available:
  \url{http://www.sciencedirect.com/science/article/pii/0925231294E0053T}
\BIBentrySTDinterwordspacing

\bibitem{romero2008}
E.~{Romero} and J.~M. {Sopena}, ``Performing feature selection with multilayer
  perceptrons,'' \emph{IEEE Transactions on Neural Networks}, vol.~19, no.~3,
  pp. 431--441, 03 2008.

\bibitem{setiono1997}
{Rudy Setiono} and {Huan Liu}, ``Neural-network feature selector,'' \emph{IEEE
  Transactions on Neural Networks}, vol.~8, no.~3, pp. 654--662, 05 1997.

\bibitem{onnia2001}
V.~{Onnia}, M.~{Tico}, and J.~{Saarinen}, ``Feature selection method using
  neural network,'' in \emph{Proceedings 2001 International Conference on Image
  Processing (Cat. No.01CH37205)}, vol.~1, 10 2001, pp. 513--516.

\bibitem{roy2015}
D.~{Roy}, K.~S.~R. {Murty}, and C.~K. {Mohan}, ``Feature selection using deep
  neural networks,'' in \emph{2015 International Joint Conference on Neural
  Networks (IJCNN)}, 07 2015, pp. 1--6.

\bibitem{zou2015}
Q.~{Zou}, L.~{Ni}, T.~{Zhang}, and Q.~{Wang}, ``Deep learning based feature
  selection for remote sensing scene classification,'' \emph{IEEE Geoscience
  and Remote Sensing Letters}, vol.~12, no.~11, pp. 2321--2325, 11 2015.

\bibitem{ibrahim2014}
R.~{Ibrahim}, N.~A. {Yousri}, M.~A. {Ismail}, and N.~M. {El-Makky},
  ``Multi-level {gene/MiRNA} feature selection using deep belief nets and
  active learning,'' in \emph{2014 36th Annual International Conference of the
  IEEE Engineering in Medicine and Biology Society}, 2014, pp. 3957--3960.

\bibitem{zhang2020}
H.~{Zhang}, J.~{Wang}, Z.~{Sun}, J.~M. {Zurada}, and N.~R. {Pal}, ``Feature
  selection for neural networks using group lasso regularization,'' \emph{IEEE
  Transactions on Knowledge and Data Engineering}, vol.~32, no.~4, pp.
  659--673, 2020.

\bibitem{wang2016}
\BIBentryALTinterwordspacing
L.~Wang, Y.~Wang, and Q.~Chang, ``Feature selection methods for big data
  bioinformatics: A survey from the search perspective,'' \emph{Methods}, vol.
  111, pp. 21 -- 31, 2016. [Online]. Available:
  \url{http://www.sciencedirect.com/science/article/pii/S1046202316302742}
\BIBentrySTDinterwordspacing

\bibitem{ioffe2015}
\BIBentryALTinterwordspacing
S.~Ioffe and C.~Szegedy, ``Batch normalization: Accelerating deep network
  training by reducing internal covariate shift,'' in \emph{Proceedings of the
  32nd International Conference on Machine Learning}, ser. Proceedings of
  Machine Learning Research, F.~Bach and D.~Blei, Eds., vol.~37.\hskip 1em plus
  0.5em minus 0.4em\relax Lille, France: PMLR, 07--09 Jul 2015, pp. 448--456.
  [Online]. Available: \url{http://proceedings.mlr.press/v37/ioffe15.html}
\BIBentrySTDinterwordspacing

\bibitem{boyd2004}
S.~Boyd and L.~Vandenberghe, \emph{Convex Optimization}.\hskip 1em plus 0.5em
  minus 0.4em\relax Cambridge University Press, 2004.

\bibitem{nielsen2015}
\BIBentryALTinterwordspacing
M.~A. Nielsen, \emph{{Neural networks and deep learning}}.\hskip 1em plus 0.5em
  minus 0.4em\relax Determination Press, 2015. [Online]. Available:
  \url{http://neuralnetworksanddeeplearning.com}
\BIBentrySTDinterwordspacing

\bibitem{lecun1998}
Y.~{Lecun}, L.~{Bottou}, Y.~{Bengio}, and P.~{Haffner}, ``Gradient-based
  learning applied to document recognition,'' \emph{Proceedings of the IEEE},
  vol.~86, no.~11, pp. 2278--2324, 1998.

\bibitem{goodfellow2016}
I.~Goodfellow, Y.~Bengio, and A.~Courville, \emph{Deep Learning}.\hskip 1em
  plus 0.5em minus 0.4em\relax MIT Press, 2016,
  \url{http://www.deeplearningbook.org}.

\bibitem{watzenig2004}
D.~{Watzenig}, B.~{Brandstatter}, and G.~{Holler}, ``Adaptive regularization
  parameter adjustment for reconstruction problems,'' \emph{IEEE Transactions
  on Magnetics}, vol.~40, no.~2, pp. 1116--1119, 2004.

\bibitem{smith2017}
L.~N. {Smith}, ``Cyclical learning rates for training neural networks,'' in
  \emph{2017 IEEE Winter Conference on Applications of Computer Vision (WACV)},
  2017, pp. 464--472.

\bibitem{kingma2015}
D.~P. Kingma and J.~Ba, ``Adam: A method for stochastic optimization,''
  \emph{CoRR}, vol. abs/1412.6980, 2015.

\bibitem{bolon-canedo2013}
V.~Bol{\'{o}}n-Canedo, N.~S{\'{a}}nchez-Maro{\~{n}}o, and A.~Alonso-Betanzos,
  ``{A review of feature selection methods on synthetic data},''
  \emph{Knowledge and Information Systems}, vol.~34, no.~3, pp. 483--519, 2013.

\bibitem{peng2005}
\BIBentryALTinterwordspacing
H.~Peng, F.~Long, and C.~Ding, ``{Feature selection based on mutual
  information: Criteria of Max-Dependency, Max-Relevance, and
  Min-Redundancy},'' \emph{IEEE Trans. on Pattern Analysis and Machine
  Intelligence}, vol.~27, no.~8, pp. 1226--1238, 2005. [Online]. Available:
  \url{https://ieeexplore.ieee.org/document/1453511}
\BIBentrySTDinterwordspacing

\bibitem{kononenko2003}
M.~Robnik-{\v{S}}ikonja and I.~Kononenko, ``Theoretical and empirical analysis
  of {ReliefF} and {RReliefF},'' \emph{Machine Learning}, vol.~53, no.~1, pp.
  23--69, 2003.

\bibitem{weir2012}
B.~Weir, ``Estimating f-statistics: A historical view,'' \emph{Philosophy of
  Science}, vol.~79, no.~5, pp. 637--643, 09 2012.

\bibitem{Abadi2015}
\BIBentryALTinterwordspacing
M.~Abadi, A.~Agarwal, P.~Barham, E.~Brevdo, Z.~Chen, C.~Citro, G.~S. Corrado,
  A.~Davis, J.~Dean, M.~Devin, S.~Ghemawat, I.~Goodfellow, A.~Harp, G.~Irving,
  M.~Isard, Y.~Jia, R.~Jozefowicz, L.~Kaiser, M.~Kudlur, J.~Levenberg,
  D.~Man\'{e}, R.~Monga, S.~Moore, D.~Murray, C.~Olah, M.~Schuster, J.~Shlens,
  B.~Steiner, I.~Sutskever, K.~Talwar, P.~Tucker, V.~Vanhoucke, V.~Vasudevan,
  F.~Vi\'{e}gas, O.~Vinyals, P.~Warden, M.~Wattenberg, M.~Wicke, Y.~Yu, and
  X.~Zheng, ``{TensorFlow}: Large-scale machine learning on heterogeneous
  systems,'' 2015, software available from tensorflow.org. [Online]. Available:
  \url{http://tensorflow.org/}
\BIBentrySTDinterwordspacing

\bibitem{friedman1991}
\BIBentryALTinterwordspacing
J.~H. Friedman, ``Multivariate adaptive regression splines,'' \emph{The Annals
  of Statistics}, vol.~19, no.~1, pp. 1--67, 1991. [Online]. Available:
  \url{http://www.jstor.org/stable/2241837}
\BIBentrySTDinterwordspacing

\bibitem{bolon-canedo2014}
\BIBentryALTinterwordspacing
V.~Bol{\'{o}}n-Canedo, N.~S{\'{a}}nchez-Maro{\~{n}}o, A.~Alonso-Betanzos, J.~M.
  Ben{\'{i}}tez, and F.~Herrera, ``A review of microarray datasets and applied
  feature selection methods,'' \emph{Information Sciences}, vol. 282, pp. 111
  -- 135, 2014. [Online]. Available:
  \url{http://www.sciencedirect.com/science/article/pii/S0020025514006021}
\BIBentrySTDinterwordspacing

\bibitem{Alon:1999}
U.~Alon, N.~Barkai, D.~Notterman, K.~Gish, S.~Ybarra, D.~Mack, and A.~Levine,
  ``{Broad patterns of gene expression revealed by clustering analysis of tumor
  and normal colon tissues probed by oligonucleotide arrays},''
  \emph{Proceedings of the National Academy of Sciences}, vol.~96, no.~12, pp.
  6745--6750, 06 1999.

\bibitem{Burczynski}
M.~E. Burczynski, R.~L. Peterson, N.~C. Twine, K.~A. Zuberek, B.~J. Brodeur,
  L.~Casciotti, V.~Maganti, P.~S. Reddy, A.~Strahs, F.~Immermann, W.~Spinelli,
  U.~Schwertschlag, A.~M. Slager, M.~M. Cotreau, and A.~J. Dorner, ``{Molecular
  classification of Crohn's disease and ulcerative colitis patients using
  transcriptional profiles in peripheral blood mononuclear cells},'' \emph{The
  Journal of Molecular Diagnostics}, vol.~8, no.~1, pp. 51--61, 02 2006.

\bibitem{Chin}
K.~Chin, S.~DeVries, J.~Fridlyand, P.~T. Spellman, R.~Roydasgupta, W.-L. Kuo,
  A.~Lapuk, R.~M. Neve, Z.~Qian, T.~Ryder, F.~Chen, H.~Feiler, T.~Tokuyasu,
  C.~Kingsley, S.~Dairkee, Z.~Meng, K.~Chew, D.~Pinkel, A.~Jain, B.~M. Ljung,
  L.~Esserman, D.~G. Albertson, F.~M. Waldman, and J.~W. Gray, ``{Genomic and
  transcriptional aberrations linked to breast cancer pathophysiologies},''
  \emph{Cancer Cell}, vol.~10, no.~6, pp. 529--541, 12 2006.

\bibitem{Chowdary}
D.~Chowdary, J.~Lathrop, J.~Skelton, K.~Curtin, T.~Briggs, Y.~Zhang, J.~Yu,
  Y.~Wang, and A.~Mazumder, ``{Prognostic gene expression signatures can be
  measured in tissues collected in RNAlater preservative},'' \emph{The Journal
  of Molecular Diagnostics}, vol.~8, no.~1, pp. 31--39, 02 2006.

\bibitem{Golub}
T.~R. Golub, D.~K. Slonim, P.~Tamayo, C.~Huard, M.~Gaasenbeek, J.~P. Mesirov,
  H.~Coller, M.~L. Loh, J.~R. Downing, M.~A. Caligiuri, C.~D. Bloomfield, and
  E.~S. Lander, ``{Molecular classification of cancer: Class discovery and
  class prediction by gene expression monitoring.}'' \emph{Science}, vol. 286,
  no. 5439, pp. 531--537, 10 1999.

\bibitem{Gordon}
G.~J.~G. Gordon, R.~V.~R. Jensen, L.-L.~L. Hsiao, S.~R.~S. Gullans, J.~E.~J.
  Blumenstock, S.~S. Ramaswamy, W.~G.~W. Richards, D.~J.~D. Sugarbaker, and
  R.~R. Bueno, ``{Translation of microarray data into clinically relevant
  cancer diagnostic tests using gene expression ratios in lung cancer and
  mesothelioma},'' \emph{Cancer Research}, vol.~62, no.~17, pp. 4963--4967, 09
  2002.

\bibitem{Singh}
\BIBentryALTinterwordspacing
D.~Singh, P.~G. Febbo, K.~Ross, D.~G. Jackson, J.~Manola, C.~Ladd, P.~Tamayo,
  A.~A. Renshaw, A.~V. D'Amico, J.~P. Richie, E.~S. Lander, M.~Loda, P.~W.
  Kantoff, T.~R. Golub, and W.~R. Sellers, ``Gene expression correlates of
  clinical prostate cancer behavior,'' \emph{Cancer Cell}, vol.~1, no.~2, pp.
  203 -- 209, 2002. [Online]. Available:
  \url{http://www.sciencedirect.com/science/article/pii/S1535610802000302}
\BIBentrySTDinterwordspacing

\bibitem{tian}
E.~Tian, F.~Zhan, R.~Walker, E.~Rasmussen, Y.~Ma, B.~Barlogie, and J.~D.
  Shaughnessy, Jr., ``{The role of the Wnt-signaling antagonist DKK1 in the
  development of osteolytic lesions in multiple myeloma},'' \emph{New England
  Journal of Medicine}, vol. 349, no.~26, pp. 2483--2494, 12 2003.

\bibitem{Khan2001}
J.~Khan, J.~S. Wei, M.~Ringn{\'{e}}r, L.~H. Saal, M.~Ladanyi, F.~Westermann,
  F.~Berthold, M.~Schwab, C.~R. Antonescu, C.~Peterson, and P.~S. Meltzer,
  ``{Classification and diagnostic prediction of cancers using gene expression
  profiling and artificial neural networks},'' \emph{Nature Medicine}, vol.~7,
  no.~6, pp. 673--679, 2001.

\bibitem{Nutt2003}
C.~L. Nutt, D.~R. Mani, R.~A. Betensky, P.~Tamayo, J.~G. Cairncross, C.~Ladd,
  U.~Pohl, C.~Hartmann, M.~E. McLaughlin, T.~T. Batchelor, P.~M. Black, A.~{Von
  Deimling}, S.~L. Pomeroy, T.~R. Golub, and D.~N. Louis, ``{Gene
  expression-based classification of malignant gliomas correlates better with
  survival than histological classification},'' \emph{Cancer Research},
  vol.~63, no.~7, pp. 1602--1607, 2003.

\bibitem{Armstrong2002}
S.~A. Armstrong, J.~E. Staunton, L.~B. Silverman, R.~Pieters, M.~L. den Boer,
  M.~D. Minden, S.~E. Sallan, E.~S. Lander, T.~R. Golub, and S.~S.~J.
  Korsmeyer, ``Mll translocations specify a distinct gene expression profile
  that distinguishes a unique leukemia,'' \emph{Nature Genetics}, vol.~30, pp.
  41--47, 2002.

\bibitem{Bhattacharjee2001}
A.~Bhattacharjee, W.~G. Richards, J.~Staunton, C.~Li, S.~Monti, P.~Vasa,
  C.~Ladd, J.~Beheshti, R.~Bueno, M.~Gillette, M.~Loda, G.~Weber, E.~J. Mark,
  E.~S. Lander, W.~Wong, B.~E. Johnson, T.~R. Golub, D.~J. Sugarbaker, and
  M.~Meyerson, ``{Classification of human lung carcinomas by mRNA expression
  profiling reveals distinct adenocarcinoma subclasses},'' \emph{Proceedings of
  the National Academy of Sciences}, vol.~98, no.~24, pp. 13\,790--13\,795,
  2001.

\bibitem{kuncheva2018}
\BIBentryALTinterwordspacing
L.~I. Kuncheva and J.~J. Rodr{\'{i}}guez, ``On feature selection protocols for
  very low-sample-size data,'' \emph{Pattern Recognition}, vol.~81, pp. 660 --
  673, 2018. [Online]. Available:
  \url{http://www.sciencedirect.com/science/article/pii/S003132031830102X}
\BIBentrySTDinterwordspacing

\bibitem{demsar2006}
\BIBentryALTinterwordspacing
J.~Dem\v{s}ar, ``Statistical comparisons of classifiers over multiple data
  sets,'' \emph{Journal of Machine Learning Research}, vol.~7, pp. 1--30, 12
  2006. [Online]. Available:
  \url{http://dl.acm.org/citation.cfm?id=1248547.1248548}
\BIBentrySTDinterwordspacing

\end{thebibliography}

% if you will not have a photo at all:
%\begin{IEEEbiographynophoto}{John Doe}
%Biography text here.
%\end{IEEEbiographynophoto}

% insert where needed to balance the two columns on the last page with
% biographies
%\newpage

%\begin{IEEEbiographynophoto}{Jane Doe}
%Biography text here.
%\end{IEEEbiographynophoto}

% You can push biographies down or up by placing
% a \vfill before or after them. The appropriate
% use of \vfill depends on what kind of text is
% on the last page and whether or not the columns
% are being equalized.

%\vfill

% Can be used to pull up biographies so that the bottom of the last one
% is flush with the other column.
%\enlargethispage{-5in}

% that's all folks
\end{document}